\theoremstyle{plain}
\newtheorem{theorem}{Theorem}[section]
\newtheorem{proposition}[theorem]{Proposition}
\newtheorem{lemma}[theorem]{Lemma}
\theoremstyle{definition}
\newtheorem{definition}[theorem]{Definition}
\newtheorem{assumption}[theorem]{Assumption}
\theoremstyle{remark}
\newcommand\independent{\protect\mathpalette{\protect\independenT}{\perp}}
\def\independenT#1#2{\mathrel{\rlap{$#1#2$}\mkern2mu{#1#2}}}
\begin{document}

\title{Correcting Confounding via Random Selection of \\ Background Variables}

\author{You-Lin Chen$^1$, Lenon Minorics$^2$,  Dominik Janzing$^2$\\
{\small 1) Amazon Research Seattle, USA } \\
{\small 2) Amazon Research T\"ubingen, Germany }\\
{\small \{cyoulin, minorics,janzind\}@amazon.com} }

\maketitle

\begin{abstract}
	We propose a method to distinguish causal influence from hidden confounding in the following scenario: given a target variable $Y$, potential causal drivers $X$, and a large number of background features, we propose a novel criterion for identifying causal relationship based on the stability of regression coefficients of $X$ on $Y$ with respect to selecting different background features. To this end, we propose a statistic ${\cal V}$ measuring the coefficient's variability. We prove, subject to a symmetry assumption for the background influence,  that ${\cal V}$ converges to zero if and only if $X$ contains no causal drivers. In experiments with simulated data, the method outperforms state of the art algorithms. Further, we report encouraging results for real-world data. Our approach aligns with the general belief that causal insights admit better generalization of statistical associations across environments, and justifies similar existing heuristic approaches from the literature. 
\end{abstract}

\section{Introduction} \label{sec:intro}

Understanding causal relations is crucial for many if not all scientific disciplines. Data-driven inference of causal relations is commonly based
on observing statistical dependences. According to \citet{reichenbach1956direction}, every statistical dependence between two random variables $X$ and $Y$ is due to $X$ influencing $Y$, $Y$ influencing $X$, or a common cause influencing both $X$ and $Y$. To distinguish between these alternatives
is challenging and identification of causal directions from passive observations alone requires strong assumptions on the data generating process, see e.g., Proposition 4.1 in
\citet{peters2017}. If $X$ and $Y$ are observed together with additional variables,
Markov condition and faithfulness admit identification up to Markov equivalence classes \cite{spirtes2000causation}. 
In linear acyclic models with non-Gaussian errors, cause and effect can be identified by independent component analysis \cite{shimizu2006linear}. In Gaussian linear structural equation models, identifiability of the exact causal structure (rather than only the Markov equivalence class) can be proved when error terms have equal variances \cite{peters2014identifiability, chen2019causal}. In the setting of a non-linear data-generating process with additive noise, a causal system becomes identifiable under mild conditions~\cite{NIPS2008_f7664060}.

In many practical applications one of the causal directions, say $Y\to X$,  can be excluded (e.g. the patient's health condition after a medical treatment cannot be the cause of the latter) due to time order, and the remaining challenge consists in inferring to what extent the dependences between $X$ and $Y$ are due to
the influence of $X$ on $Y$ and which part is due to a common cause of both. If a complete list of common causes is given, the problem boils down to
a statistical problem whose challenge comes from the large number of confounders \cite{Imbens2015, chernozhukov2018double, shalit2017estimating, raj2020causal}.
Our work considers the same scenario, but does not assume that the full list of confounders is known. This way, identification of the effect of $X$ on $Y$ 
requires additional assumptions and methodology beyond statistics. 
Here we assume there are $m$ datasets $\{(Y, X, E_i)\}_{i=1}^m$, possibly from different environments, where $\{E_i\}_{i=1}^m$ is a set of background features. For example, data $(Y, X, E_i)$ are collected from different countries, and $E_i$ are population and several economic indicators.
For each $(Y, X, E_i)$, we obtain different regression coefficients of $X$. The key insight (provable subject to a symmetry assumption) is that the stability of the regression coefficients with respect to different background covariates $E_i$ is a reasonable criterion for the regression coefficients to be causal. \citet{lu2021causal} used a similar heuristic idea to recover a directed gene network in a time series framework but without theoretical justification. This idea is in the spirit of an increasingly popular belief about the following relationship:

\noindent
relation causal $\Leftrightarrow$ robust and invariant statistical association

Already \citet{haavelmo1943statistical} argued that causal statements should be robust regarding different perturbations and environments. Recent approaches to causal inference exploit the opposite direction of this relationship. For example, \citet{Peters2016causal, buhlmann2020invariance,heinze2018invariant, bengio2020meta} argued that the predictions from a causal model are more robust under different environments, while predictions from a non-causal model generalize poorly. They further showed that the causal relationship could be identified given samples from different experimental settings. See, however, also critical remarks in \citet{rosenfeld2021risks, kamath2021does}.

This work proposes a method to distinguish causal influence from hidden confounding based on the stability of the regression coefficients. Our scenario differs from other theoretical results exploiting the relation between invariance and causality in the literature as follows. First, our environments differ with respect to different background features $E_i$, while the above-mentioned literature typically considers the same variables with distributional shifts. Second, we explicitly allow for unobserved confounders and propose a method to correct for confounding effects.
Our method, however, requires an assumption for the prior distribution of the true causal coefficients of the background variables. While we are aware that our assumption is rather restrictive and debatable, correcting for unobserved confounding is an extremely hard problem which is only solvable subject to strong assumptions. As valid for many methods in ML, and causality in particular, the essential question is whether the method shows some robustness regarding violations. Our empirical study with real datasets in Section~\ref{sec:experiments} suggests some robustness in this regard.  Moreover, similarly strong assumptions relying on symmetries (e.g.~\citet{janzing2019causal, janzing2018detecting}) or a confounder that affect multiple measured features 
\cite{wang2019blessings} are commonly used in confounder correction. Our contributions can be summarized as follows:

\begin{enumerate}
	\item We propose a procedure of detecting causal relations based on the stability of regression coefficients with respect to different background features. We examine an example of structural equation models in detail and provide sufficient assumptions under which coefficients' stability can be used to detect the true causal drivers even in the presence of unobserved confounding.
	
	\item We propose a test statistic to quantify the stability of regression coefficients and  show that under a mild condition on the strength of the confounding effect, the test statistic is zero if and only if $X$ is the causal driver of $Y$.
	
	\item We derive a permutation test for inferring the causal influence of X on Y based on our model assumption and proposed statistic. 
\end{enumerate}

\section{Related work}

{\bf Stability of regression coefficients:} \citet{leamer1978specification, lu2014robustness, chen2015exogeneity}  related coefficient robustness to unobserved confounding and causal inference.
\citet{oster2017unobservable} built a theory connecting omitted variable bias to coefficient stability and used coefficient's movement to estimate a confounding effect. However, their method required the strong assumption that unobservable and observable confounders are independent, which we do not need.
Likewise, \citet{Imbens_2003} estimated the amount of variation of treatment and variable required to generate significant confounding bias and discuss
plausibility of strong bias based on these insights. However, also these estimations assume observable and unobservable confounders to be independent, and also specific parametric models. 
\citet{cinelli2019making} was based on a similar idea as \citet{Imbens_2003}, with a different model fit method and relaxing some parametric assumptions.  


{\bf Correcting hidden confounding:}
Instrumental variables~\cite{white1982instrumental, muandet2019dual} is a classic approach to correcting for hidden confounding and estimating causal effects. However, the existence of instrumental variables is always debatable. Several recent works proposed new methods utilizing different causal structures and assumptions. \citet{NIPS2019_9419} proposed a constraint-based causal feature selection method for identifying potential causes under the setting that there can also be hidden variables acting as common causes, provided that a cause for each candidate cause could be observed. \citet{wang2019blessings} used unsupervised machine learning to learn the representation of unobserved confounders in multiple-cause settings. \citet{chernozhukov2017lava} assumed the hidden confounding effect is sparse and combined lasso and ridge regression to estimate the bias and the true structure coefficient, and \citet{cevid2018spectral, guo2020doubly} established its asymptotic normality and efficiency in the Gauss-Markov sense. \citet{janzing2018detecting} suggested confounder detection methods that are inspired by the intuition that $P_X$ and the conditional $P_{Y|X}$ do not contain information about each other.

{\bf Conditional independence tests:} Given that the association between $X$ and $Y$ is only confounded by {\it observed} variables, high-dimensional conditional independence (which is, however, hard to test \cite{shah2020hardness}) detects whether $X$ influences $Y$. For linear and parametric methods, \citet{zhang2014confidence, van2014asymptotically} constructed confidence intervals for individual coefficients or linear combinations of some variables and showed asymptotically optimality under certain assumptions. \citet{dezeure2017high} proposed a bootstrap approach to construct confidence intervals. For more nonlinear extension, see \citet{ramsey2014scalable, fan2020projection} for using partial correlation and \citet{fukumizu2008kernel, zhang2012kernel, muandet2020kernel} for kernel-based methods. In particular, \citet{azadkia2021simple} proposed a nonlinear generalization of the partial $R^2$ statistic and showed 
(without any distributional assumption)
that the limit of the proposed statistic is zero if and only if conditional independence holds. \citet{barber2020testing} generalized Goodness-of-fit tests to testing conditional independence.

\section{Notions and Problem Setup}

\begin{figure*}[t]
	\centering
	\resizebox{0.4\textwidth}{!}{
		\begin{tikzpicture}
			\node[obs]                               (Y) {$Y$};
			\node[obs, below= 0.5cm of Y] (X) {$X$};
			
			\node[latent, left =1.5cm of X, yshift=-1.6cm] (Z1){$Z_1$};
			\node[latent, right =1.5cm of X, yshift=-1.6cm] (Z2){$Z_2$};
			
			\node[obs, left =3.8cm of Y, yshift=-0.8cm]  (W1) {$W_1$};
			\node[obs, left =4cm of Y]  (W2) {$W_2$};
			\node[obs, left =3.5cm of Y, yshift=0.7cm]  (W3) {$W_3$};
			
			\node[latent, left = 2.8cm of Y, yshift=1.6cm]  (W4) {$W_4$};
			
			\node[obs, above=1.2cm of Y, xshift=-0.5cm]  (W5) {$W_5$};
			\node[obs, above=1.3cm of Y, xshift=0.4cm]  (W6) {$W_6$};
			\node[obs, above=1.3cm of Y, xshift= 1.4cm]  (W7) {$W_7$};    
			\node[obs, above=1cm of Y, xshift= 2.3cm]  (W8) {$W_8$};
			
			\node[latent, right=3.5 cm of Y, yshift=.9cm]  (W9) {$W_9$};
			
			\node[obs, right=4cm of Y]  (W10) {$W_{10}$};
			\node[obs, right=4.5cm of Y, yshift= -0.8cm]  (W11) {$W_{11}$};
			\node[obs, right=3.8cm of Y, yshift= -1.4cm]  (W12) {$W_{12}$};
			\edge {X} {Y} ;
			\edge {W1, W2,W3,W4} {Y} ;
			\edge {W5,W6,W7,W8} {Y} ;
			\edge {W9, W10,W11,W12} {Y} ;
			\edge {Z1}{W1, W3, W4, W5, X};
			\edge {Z2}{W7, W9, W12, W1, X};
			
			\plate {} {(Y)(W2)(W3)(X.south east)} {$E_1$} ;
			\plate {} {(Y)(W5)(W7)(W8)(X)} {$E_2$} ;
			\plate {} {(X)(Y)(W11)(W12)} {$E_3$} ;
		\end{tikzpicture}
	} \quad \quad
	\resizebox{0.3\textwidth}{!}{
		\begin{tikzpicture}
			\node[obs]                               (Y) {$Y$};
			\node[obs, left=1cm of Y, yshift=0.65cm] (X) {$X$};
			\node[obs, left=1cm of Y, yshift=-0.65cm]  (W) {$W$};
			\node[latent, left=1cm of X, yshift=-0.65cm] (Z) {$Z$};
			
			\path[->,draw]
			(Z) edge node[above] {$\Bb$} (X)
			(Z) edge node[below] {$\Ab$} (W)
			(X) edge node[above] {$\beta$} (Y)
			(W) edge node[below] {$\gamma$} (Y);
		\end{tikzpicture}
	}
	\caption{Left: An example of our scenario is provided. An arrow indicates a causal effect in the arrow's direction, e.g., $ X $ causes $ Y $. $Z$ is a latent variable and introduced for creating a non-trivial relationship between $X$ and $W$. Suppose no dataset contains all variables in $Y, X, W$. Instead, we have three datasets: $\lbrace Y, X, W_{S_1} \rbrace$, $\lbrace Y, X, W_{S_2} \rbrace$, $\lbrace Y, X, W_{S_3} \rbrace$ where $S_1=\{1,2,3\}, S_2=\{5,6,7,8\}, S_3=\{10,11,12\}$. Note that $\lbrace W_4, W_9\rbrace$ are not contained in any dataset. The existence of the unobserved confounders $\lbrace Z_1, Z_2, W_4, W_9\rbrace$ implies conditional independence tests cannot directly solve our scenario. Right: This figure visualizes the model \eqref{eq:se1} where the latent variables $Z$ are independent sources that influence $X$ and $W$ at the same time.} \label{fig:story}
\end{figure*}
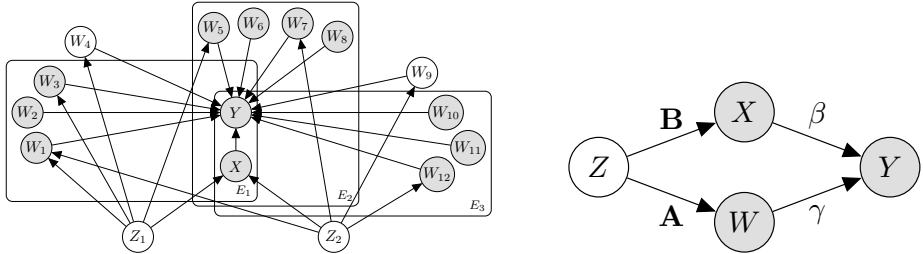

This section presents our notions, model assumptions, and the definition of causality via structural equation models.

{\bf Notations.} Let $S$ be an ordered subset of $\lbrace 1,\dots,q \rbrace$ and $S^c=\lbrace 1,\dots,q\rbrace -S$. $|S|$ denotes the cardinality of $S$. Given a vector $v=(v_1, \dots, v_q)^\top$ and a matrix $A=(a_1,\dots, a_d)^\top$, denote $v_S=(v_{i})_{i \in S}^\top$ as the sub-vector of $v$ with respect to $S$ and $\Ab_{S} = (a_i)_{i \in S}^\top$ as the sub-matrix of $A$. $\Ncal(\mu, \Sigmab)$ denotes the multivariate normal distribution with the mean vector $\mu$ and the covariance matrix $\Sigmab$. $X_1 \sim X_2$ means $X_1, X_2$ are equal in law. $\Ib$ and $\Ib_d$ are the identity matrix of unspecified size and of size $d$, respectively. Similarly, $\zero$ is zero matrix of unspecified size. Given $a\in \RR^d$, $\diag(a) \in \RR^{d \times d}$ is a matrix in which the entries outside the main diagonal are all zero and diagonal elements are equal to $a$. $\|\cdot\|$ denotes the Euclidean norm. For a real-valued function $f(x)$ and $g(x)$, we write $f(x)=o(g(x))$ and $f(x)=O(g(x))$ if $\lim_{x \rightarrow \infty} f(x)/g(x) = 0$ and there exist a constant $c$ such that $\lim_{x \rightarrow \infty} f(x)/g(x) = c$, respectively. $X \stackrel{p}{\rightarrow} Y$ means $X$ converges to $Y$ in probability.

{\bf Problem Setup.} 
To understand why regression coefficients' stability is a reasonable criterion for detecting causal relations, we utilize an example of structural equation models (SEMs) to examine this idea in detail. To be specific, we consider the following linear SEM.
\begin{equation}\label{eq:se1}
	\begin{split}
		Z &\leftarrow N_z \in \RR^r, \\
		W &\leftarrow \Ab Z + N_w \in \RR^q, \\
		X &\leftarrow \Bb Z  +N_x \in \RR^d, \\ 
		Y &\leftarrow  \beta^\top X + \gamma^\top W + N_y \in \RR,
	\end{split}
\end{equation}
where $\Ab = (a_1, \dots, a_r) \in \RR^{q \times r}, \Bb = (b_1, \dots, b_r) \in \RR^{d \times r}$, $N_\square,$ are independent noises such that $\Cov(N_\square)=\diag (\sigma^2_\square)$ for $\square = x,y, w, z$. In SEM \eqref{eq:se1}, $Y$ is an outcome of interest, $X $ represents the potential cause, $W$ are the auxiliary background features, $Z$ is a hidden confounder, and $d, q, r$ are their dimensionality, respectively. The corresponding directed acyclic graph is shown in Figure~\ref{fig:story} right.

The causal interpretation of \eqref{eq:se1} implies that $X$ with $\|\beta\|\neq 0$ is called a {\it cause} of $Y$. Our work considers two crucial tasks: 

\noindent
{\it (i) qualitative causal analysis:} identify the cause $X$, that is, determine whether $\|\beta\| \neq 0$ \footnote{or, more realistically, $\|\beta\|$ should be above a certain threshold} 

\noindent
{\it (ii) quantitative causal analysis:} estimate $\beta$. 

Throughout, we assume that the data $\{(y_i, x_i, w_i, n_{y,i})\}_{i=1}^\ell$ consist of $\ell$ independent and identically distributed (i.i.d.) samples that are drawn from the SEM \eqref{eq:se1} and write 
$\Yb=(y_1, \dots, y_\ell)^\top \in \RR^\ell$, $\Xb = (x_1, \dots, x_\ell)^\top \in \RR^{\ell \times d}$, $\Wb=(w_1,\dots, w_\ell)^\top \in \RR^{\ell \times q}$ and $\Nb_y = (n_{y,1}, \dots, n_{y,\ell})^\top \in \RR^\ell $.

Our scenario supposes that there are $m$ nonempty subsets $S_1, \dots, S_m$ of $\{1, \dots, q\}$ such that the background feature $E_j$ correspond to possibly different selections $W_{S_j}$, i.e. $E_j=W_{S_j}$. It is worth emphasizing that our idea of detecting causes does not highly depend on the correlation structure between $X$ and $Y$. In fact, our idea only requires the following high-level causal assumptions.
\begin{assumption} \label{amp:causal}
	(i) All confounding paths between $X$ and $Y$ are blocked by $W$
	(ii) $W$ does not contain an effect of $Y$ and its descendants.
	(iii) $X$ does not influence $W$.  
	(iv) The structural equations are invariant across different $E_j$.
	(v) There is no systematic bias caused by hidden confounders.
\end{assumption}

It can be shown that SEM~\eqref{eq:se1} satisfies (i) - (iv) in our causal assumption~\ref{amp:causal}.
Assumption (i) and (ii) ensure that interventional probabilities $P(Y | do(X=x))$ \cite{pearl2009causality} can be computed from the joint distribution of $Y, X, W$. For the simple case of a linear regression model, the linear influence of $X$ on $Y$ is given by regressing $Y$ on $X, W$, and ignoring the regression coefficients corresponding to $W$.
However, we would like to emphasize that {\it not all $W$ need to be observed} in our scenario.  That is, some background features may be hidden (i.e., $\cup_j S_j \neq \{1,\dots, q\}$. See the left of Figure~\ref{fig:story} for an example).
Assumption (iii) rules out the scenario where $X$ influences $Y$ indirectly via the mediator  $Y$.
Assumption (iv) means that the causal relation between $X$ and $Y$ remains constant across environments and is a typical assumption in the literature of discussing the relation between invariance and causality.
Assumption (v) may be vague at this point, but it is crucial for our method and is the key insight why it is possible to obtain some causal information from observational data. Generally speaking, assumption (v) allows us to leverage multiple environments like intervention 
\cite{eberhardt2007interventions, peters2017} to extract causal information. Technically speaking, assumption (v) provides a sufficient condition that the causal structure becomes identifiable. We will provide a mathematical definition for assumption (v) later in Section~\ref{sec:justification}.


\section{Identifiying causal drivers in the presence of unobserved confounders} \label{sec:justification}

In this section, we discuss our proposed method that shows how the variability of regression coefficients can be used to identify causal relations in the presence of unobserved confounders. Before we get to the main result of this section, Theorem \ref{thm:main-res}, we first explain the difficulty of identifying causal drives if we only consider the corresponding regression coefficients. Further, we explain why the stability of regression coefficients can only be a necessary and sufficient property to detect causal influence if certain assumptions about the data generating process are made. All proofs are deferred to Appendix~\ref{apd:proof-normal} and \ref{apd:rotation-invariant-distribution}.


{\bf The problem of biased regression coefficients:} Recall the scenario in Figure~\ref{fig:story} under which we have three datasets $\{(Y, X, W_{S_i})\}_{i=1}^3$. For simplicity, we assume that we have infinite samples for each dataset and obtain the regression coefficients $\hat{\beta}(S_j)$ of $Y$ on $X$ given the background features $W_{S_j}$ for $j=1,2,3$. A straightforward approach to test whether X causally influences Y would be to test whether $\hat{\beta}(S_j)$ is zero or not. However, due to latent confounders $\{Z_1, Z_2, W_4, W_9\}$, $\hat{\beta}(S_j)$ is biased even if we have infinite samples unless $W_{S_j}$ blocks the confounding path entirely. To see this, for any subset $S$, $\hat{\beta}(S)$ is given by
\begin{equation} \label{eq:reg-coe-1}
	\begin{split}
		\hat{\beta}(S)
		&= (\Ib_d, \zero) \Var \left(X, W_S\right)^{-1}
		\Cov \left( (X, W_S), Y \right) \\
		&= \beta + \Cb(S) \gamma
	\end{split}
\end{equation}
where $\Db_{S} = \diag (\sigma_{w_S})$ and
\begin{equation} \label{eq:Cb}
\begin{split}
	&(\Cb(S)^\top)_S \\
	&= (\Db_x^{-1} \Bb (\Ib+\Bb^\top \Db_x^{-1} \Bb + \Ab_{S}^\top \Db_{S} \Ab_{S})^{-1} \Ab_{S^c}^\top)^\top,  \\
	&(\Cb(S)^\top)_{S^c} = \zero.
\end{split}
\end{equation}

Hence, biased regression coefficients seemingly render the problem of identifying causal drivers by testing $ \hat{\beta} \neq 0$ impossible due to hidden confounders.
However, a key observation is that $\hat{\beta}(S_j)$ is 'relatively' close to $\beta$  (on the scale of  $\|\beta\|$) for all $j=1,2,3$ as long as $\|\beta\|$ is large enough. That is, all vectors $\hat{\beta}(S_1), \hat{\beta}(S_2), \hat{\beta}(S_3)$ are similar when $X$ is the cause of $Y$, which may reveal the causal relation between $X$ and $Y$ by virtue of \eqref{eq:reg-coe-1}. As a result, we might suspect that if $\hat{\beta}(S_j)$ are close to each other for all $j$ and are not zero, then X indeed causally influences $Y$.

{\bf The problem of asymmetric hidden confounding:}
Although we have argued that causal influence implies stable regression coefficients, we do not have the reverse implication. That is, stability of orientation will indicate that $X$ causally influences $Y$. In fact, all vectors $ \hat{\beta}(S_1), \hat{\beta}(S_2), \hat{\beta}(S_3) $ can be similar even when $\|\beta\|$ is small, which leads to stable regression coefficients even if the relation between $X$ and $Y$ is purely due to hidden confounding. To see this, consider the scenario in which all elements of $A$ and $\gamma$ are one. In that case, the confounders $\Cb(S_j) \gamma$ are similar for all $S_j$, and so  $\hat{\beta}(S_1)$, $\hat{\beta}(S_2)$, $\hat{\beta}(S_3)$ are close to each other even if $\|\beta\|=0$.
This argument shows that we need additional assumptions to ensure that stable regression coefficients imply true causal relation.

The example above shows that, in general, stable regression coefficients are not enough to ensure true causal relation. Hence, we need to restrict the confounding behavior in the sense of Assumption~\ref{amp:causal}.(v). Formally, we use the notion of spherical symmetry to obtain non-biased hidden confounding.

\begin{definition}\citep{bryc2012normal} \label{def:sherically-symmetric-1}	
	A random vector $\nu=(\nu_1, \dots, \nu_q)^\top$ is spherically symmetric if $e^\top \nu \sim \nu_1,$ for all $e$ provided that $\|e\|=1$.	
\end{definition}

Examples of spherically symmetric distributions include multivariate normal distributions with zero mean and identity covariance matrix and uniform distributions on the unit sphere. The former is a typical prior distribution in Bayesian statistics, and the latter provides an example where the components $\{\nu_i\}_{i=1}^q$ are not necessarily independent. Spherical symmetry is a popular notion in multivariate analysis~\cite{fang2018symmetric} and has a wide range of applications in Bayesian statistics~\cite{maruyama2008admissibility, fourdrinier2010robust, maruyama2014robust}, shape analysis in object recognition~\cite{hamsici2009rotation} and causal inference~\cite{janzing2018detecting, janzing2019causal}.

The symmetry assumption ensures that $\Cb(S_j) \gamma$ in \eqref{eq:reg-coe-1} varies with respect to different $S_j$. As a result, if $\|\beta\| = 0$, all vectors $\lbrace \hat{\beta}(S_1), \hat{\beta}(S_2), \hat{\beta}(S_3) \rbrace$ are distinct.
However, there is still the problem that the bias terms of different regression coefficients $\hat{\beta}(S_j)$ are correlated. As we will see later, this problem can be solved as long as $q$ is large enough.

We admit that one can easily construct real-world examples where spherical symmetry is violated (for instance, if many features have a positive influence on the target). However, the qualitative result regarding the relation between regression stability weak confoundedness persists for many other priors. Further, it is common in causal inference that {\it provable} results require strong assumptions.


{\bf The equivalence between causation and stable regression coefficients: }
After we discussed our assumption on the prior distribution, we now formalize what it means that regression coefficients remain stable. In particular, we develop a stability statistic that relies on the variability of the regression coefficients and relates its asymptotic properties to causal links between $X$ and $Y$.

\begin{definition}[{\bf Stability statistic}]
	Given some vectors $\{v_j\}_{j=1}^m$ in $\RR^d$, the stability statistic is defined as
	\begin{equation}\label{eq:var}
		\Vcal := \Vcal(v_1, \dots, v_m) = 1 - \frac{\|m^{-1} \sum_{j=1}^{m} v_j \|^2}{ m^{-1} \sum_{j=1}^{m}\| v_j \|^2}.
	\end{equation}
\end{definition}

Note that if $v_1 = \dots = v_m$, then $\Vcal =0$. Further, if $v_1, \dots, v_m$ are uniformly distributed on the unit sphere, we get $\Vcal \rightarrow 1$ as $m\rightarrow \infty$. The closer $v_1, \dots v_n$ are, the smaller $\Vcal$ gets. Hence, $\Vcal$ quantifies the variability of $v_1, \dots , v_n$.

We are ready to present our main result addressing the equivalence between causation and stable regression coefficients. In particular, our main theorem enables us to solve our crucial tasks: (i) infer if $\|\beta\| \neq 0$ and (ii) estimate $\beta$. Note that all results are established in the infinite sample limit, and the randomness here only comes from the prior. 

\begin{theorem}\label{thm:main-res}
	Let $S_1, \dots , S_m \subsetneq \{1,\dots , q\}$ and $\|\beta\|=\rho_\beta$. Assume that $\gamma$ is a spherically symmetric random vector such that $\EE\|\gamma\|^4 = O(\rho_\gamma^4/q^2)$ and that
	\begin{equation} \label{eq:confoundingSte-3}
		\frac{1}{ m} \sum_{j=1}^{m} \tr [\Cb(S_j)^\top \Cb(S_j)]   = o(q), \ \ \ \text{ as $q\rightarrow\infty$, }
	\end{equation}
	where $\Cb(S)$ is defined in \eqref{eq:Cb}. Then, it holds that (i) $\frac{1}{m}\sum_{j=1}^{m} \hat{\beta}(S_j) \stackrel{p}{\rightarrow} \beta$ and (ii)
	\begin{equation*}
		\Vcal(\hat{\beta}(S_1), \dots, \hat{\beta}(S_m))  \stackrel{p}{\rightarrow} 
		\begin{cases}
			0 &\text{ if $\rho_\beta > 0$ } \\
			c  & \text{ if $\rho_\beta = 0$ }
		\end{cases},
	\end{equation*}
	where $c>0$ is a deterministic constant depending on $\{\Cb(S_j)\}_{j=1}^m$.
\end{theorem}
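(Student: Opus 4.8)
The plan is to work throughout with the decomposition $\hat{\beta}(S_j)=\beta+\Cb(S_j)\gamma$ from \eqref{eq:reg-coe-1}, exploiting that a spherically symmetric $\gamma$ is centered and isotropic: $\EE[\gamma]=0$ and $\EE[\gamma\gamma^\top]=(\EE\|\gamma\|^2/q)\,\Ib_q$. Writing $u_j:=\Cb(S_j)\gamma$, $\bar{\Cb}:=m^{-1}\sum_j\Cb(S_j)$ and $\mathbf{M}:=m^{-1}\sum_j\Cb(S_j)^\top\Cb(S_j)$, the scalar quantities that govern everything are $\EE\|\bar{\Cb}\gamma\|^2=(\EE\|\gamma\|^2/q)\,\tr[\bar{\Cb}^\top\bar{\Cb}]$ and $\EE[m^{-1}\sum_j\|u_j\|^2]=(\EE\|\gamma\|^2/q)\,\tr[\mathbf{M}]$. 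First I would record two scale bounds used repeatedly: Jensen applied to the fourth-moment hypothesis gives $\EE\|\gamma\|^2\le(\EE\|\gamma\|^4)^{1/2}=O(\rho_\gamma^2/q)$, and convexity of the squared Frobenius norm gives $\tr[\bar{\Cb}^\top\bar{\Cb}]\le m^{-1}\sum_j\tr[\Cb(S_j)^\top\Cb(S_j)]=\tr[\mathbf{M}]=o(q)$ by \eqref{eq:confoundingSte-3}.

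For part (i), $m^{-1}\sum_j\hat{\beta}(S_j)=\beta+\bar{\Cb}\gamma$, so it suffices to show $\bar{\Cb}\gamma\stackrel{p}{\rightarrow}0$. Combining the two bounds, $\EE\|\bar{\Cb}\gamma\|^2=(\EE\|\gamma\|^2/q)\,\tr[\bar{\Cb}^\top\bar{\Cb}]=(O(\rho_\gamma^2/q)/q)\cdot o(q)=o(\rho_\gamma^2/q)\to0$, and Markov's inequality yields $\bar{\Cb}\gamma\stackrel{p}{\rightarrow}0$. The identical argument applied to $\mathbf{M}$ shows $m^{-1}\sum_j\|u_j\|^2\stackrel{p}{\rightarrow}0$, since its expectation is $(\EE\|\gamma\|^2/q)\,\tr[\mathbf{M}]\to0$.

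For part (ii) I rewrite $\Vcal=1-N/D$ with $N=\|\beta+\bar{\Cb}\gamma\|^2$ and $D=\|\beta\|^2+2\beta^\top\bar{\Cb}\gamma+m^{-1}\sum_j\|u_j\|^2$. When $\rho_\beta>0$, the convergences just established give $N\stackrel{p}{\rightarrow}\rho_\beta^2$ and $D\stackrel{p}{\rightarrow}\rho_\beta^2$; since $\rho_\beta^2>0$, the continuous mapping theorem gives $\Vcal\stackrel{p}{\rightarrow}0$. The substantive case is $\rho_\beta=0$, where $\beta=0$ and $\Vcal=1-\dfrac{\gamma^\top\bar{\Cb}^\top\bar{\Cb}\gamma}{\gamma^\top\mathbf{M}\gamma}$. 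This ratio is invariant under rescaling $\gamma$, so by spherical symmetry I may replace $\gamma$ by a vector uniform on the unit sphere (writing $\gamma=R\,U$ with $U$ uniform and $R=\|\gamma\|$ independent, the factor $R^2$ cancels). The target is to show each quadratic form concentrates about its mean, i.e. $\gamma^\top\bar{\Cb}^\top\bar{\Cb}\gamma=(1+o_p(1))(\EE\|\gamma\|^2/q)\tr[\bar{\Cb}^\top\bar{\Cb}]$ and likewise for $\mathbf{M}$; granting this, Slutsky's theorem yields $\Vcal\stackrel{p}{\rightarrow}c:=1-\tr[\bar{\Cb}^\top\bar{\Cb}]/\tr[\mathbf{M}]$, and the strict convexity bound above — strict precisely because the $\Cb(S_j)$ are not all equal once the $S_j$ differ and the confounding is non-degenerate, as argued after \ref{def:sherically-symmetric-1} — gives $c>0$.

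The main obstacle is exactly this concentration step. For $U$ uniform on the sphere and any symmetric $\Kb$ one has $\EE[U^\top\Kb U]=\tr[\Kb]/q$ and a variance whose leading term is of order $\tr[\Kb^2]/q^2$, so the relative fluctuation is controlled by $\tr[\Kb^2]/(\tr[\Kb])^2$. I would therefore isolate a lemma stating that for both $\Kb=\bar{\Cb}^\top\bar{\Cb}$ and $\Kb=\mathbf{M}$ this ratio vanishes as $q\to\infty$, i.e. that the effective rank of these matrices diverges. This is where the detailed form of $\Cb(S_j)$ in \eqref{eq:Cb} must be used: its singular values are inherited from the confounding matrices $\Ab,\Bb$ and the selections $S_j$, and one has to show they are spread across many directions rather than concentrated on a few, with \eqref{eq:confoundingSte-3} fixing the overall scale. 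The remaining ingredients — the moment identities for spherically symmetric vectors, Jensen, Markov and Slutsky — are routine; the spectral control of $\Cb(S_j)$ is the crux, and the only place where an assumption beyond \eqref{eq:confoundingSte-3} and the fourth-moment bound may be needed.
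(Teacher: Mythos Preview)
Your route is the paper's route: it works with the same decomposition $\hat\beta(S_j)=\beta+\Cb(S_j)\gamma$, defines the same two aggregates (your $\bar{\Cb},\mathbf{M}$ are the paper's $\Cb_m,\widetilde{\Cb}_m$), and uses the same isotropy identity $\EE[\gamma\gamma^\top]=(\EE\|\gamma\|^2/q)\Ib_q$ together with second–moment bounds on the quadratic forms. For part~(i) and for part~(ii) with $\rho_\beta>0$ your argument and the paper's are essentially identical: the paper bounds $\EE(\gamma^\top H\gamma)^2\le(\tr H)^2\,\EE\|\gamma\|^4$ (its Proposition in Appendix~D), combines this with $\tr\widetilde{\Cb}_m=o(q)$ and $\EE\|\gamma\|^4=O(\rho_\gamma^4/q^2)$ to get $\gamma^\top\Cb_m^\top\Cb_m\gamma\to0$ and $\gamma^\top\widetilde{\Cb}_m\gamma\to0$ in $L^2$, and then applies continuous mapping exactly as you do.

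Where you and the paper part ways is the $\rho_\beta=0$ case. The paper does \emph{not} carry out the concentration analysis you outline: it computes the first moments $\EE[\gamma^\top\Cb_m^\top\Cb_m\gamma]=(\EE\|\gamma\|^2/q)\tr(\Cb_m^\top\Cb_m)$ and $\EE[\gamma^\top\widetilde{\Cb}_m\gamma]=(\EE\|\gamma\|^2/q)\tr(\widetilde{\Cb}_m)$ and then simply asserts that the ratio converges in probability to the ratio of traces, citing the earlier Gaussian appendix, which itself does not supply a variance argument for the ratio. Its only second–moment tool is the crude bound above, which — as you rightly note — shows both numerator and denominator tend to zero but does not control the relative fluctuation. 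Your observation that one would need $\tr(K^2)/(\tr K)^2\to0$ for $K=\bar{\Cb}^\top\bar{\Cb}$ and $K=\mathbf{M}$ (equivalently, diverging effective rank) is exactly the missing ingredient, and the paper neither verifies it nor introduces an extra hypothesis for it. So you have not overlooked a trick the paper uses; rather, you have put your finger on a step the paper leaves unjustified. Your identification of $c=1-\tr(\bar{\Cb}^\top\bar{\Cb})/\tr(\mathbf{M})$ and your argument for $c>0$ via strict convexity match what the paper records.
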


(i) in Theorem~\ref{thm:main-res} shows that the confounding effect can be corrected by averaging when the strength of confounding increases slowly enough in $q$, and $\frac{1}{m}\sum_{j=1}^{m} \hat{\beta}(S_j)$ is a consistent estimator of $\beta$. (ii) in Theorem~\ref{thm:main-res} justifies that $\Vcal$ indeed is a reasonable test statistic for:
\begin{equation}\label{eq:hypothesis}
	H_0:\|\beta\|=0,  \ \ \ H_A: \|\beta\| \neq 0.
\end{equation}
We will discuss the details of hypothesis testing for causality in the next section. 

Condition \eqref{eq:confoundingSte-3} relates to mild correlation between $\{\hat{\beta}(S_j)\}_{j=1}^m$ caused by confounders between $X$ and $W$. Theorem~\ref{thm:main-res} shows that the confounding effect can be corrected by averaging when the strength of confounding increases slowly enough in $q$. Note that we do not assume that $\{W_{S_j}\}_{j=1}^m$ covers all background variables $W$, i.e. we can have $\bigcup_{j=1}^m S_j \neq \{1,...q\}$. Thus, Theorem \ref{thm:main-res} addresses also the case of hidden confounding. In Appendix~\ref{apd:example} we give  two examples where the conditions of Theorem~\ref{thm:main-res} are satisfied. 

We would like to emphasize that our idea of detecting causes and its analysis does not highly depend on the generative process \eqref{eq:se1}, and our simplified models are needed only to allow us to study and present our idea for the relationship between causality and regression coefficients' stability in a simple manner. The goal of this paper is to understand why regression coefficients' stability is a reasonable criterion for detecting causal relations instead of deriving theorems in the most general form.

\section{Statistical testing for causal influence}\label{sec:permuation-test}

This section discusses how to conduct statistical tests for hypothesis
\eqref{eq:hypothesis}. As discussed, Theorem  \ref{thm:main-res} shows that $\Vcal$ converges a positive number under the null hypothesis $H_0$ and converges to zero under the alternative hypothesis when the sample size is infinite and $q$ goes to infinity. Given a realization of $\gamma$, we do not have any finite sample statements about $\Vcal$, which depend on the causal structure parameters and the noise $N_y$. In particular, when the noise $N_y$ is normally distributed, the numerator and denominator of $\Vcal$ follows two correlated noncentral chi-square distributions whose covariance depends on $\{S_j\}$ and the model parameters $\Ab$ and $\Bb$. In other words, $\Vcal$ is not a pivotal statistic, i.e., $\Vcal$ is a statistic whose null distribution depends on some unknown parameters. This observation means that some unknown model parameters are needed to compute the null distribution of $\Vcal$ as well as p-values. However, it may be intractable to infer the parameters of the model due to high dimensionality. Even worse, the model may be misspecified. To address those problems, we recommend permutation tests~\cite{fisher1936coefficient, anderson1999empirical, winkler2014permutation,hemerik2020permutation} which is a simple yet effective model-free method to conduct statistical inference and provides exact type I error control.

The permutation test is robust against violations of some assumptions such as normality or homoscedasticity. Appendix~\ref{apd:permutation-test} gives a brief introduction as well as the definition of exchangeability.
To present permutation tests, given data $\Xb=(x_1, \dots, x_\ell)$ and a permutation $\pi$, we let $\Xb^\pi :=(x_{\pi(1)}, \dots, x_{\pi(\ell)})^\top$
be the permutation of $\Xb$ according to $\pi$. We summarize our inference procedure in Algorithm~\ref{alg:PT}, referred to as the permutation test. We use model averaging in Algorithm~\ref{alg:model-averaging} in Appendix~\ref{apd:model-averaging} to estimate $\hat \gamma$ since it can be incorporated with our setting without huge additional computations. See Appendix~\ref{apd:model-averaging} for a brief introduction. Note that \eqref{eq:Yhat} in Algorithm~\ref{alg:PT} is computable since $\hat \gamma_S = 0$ if $W_{S}$ is unobserved.  

\begin{algorithm}[ht]
	\caption{Permutation Test} \label{alg:PT}
	\begin{algorithmic}
		\STATE {\bfseries input:} $m$ datasets $\{(Y, X, W_{S_j})\}_{j=1}^m$, a estimation of $\gamma$ $\hat \gamma$, the number of permutations $M$.
		\STATE Compute $\Vcal_0 = \Vcal(\hat{\beta}(S_1), \dots, \hat{\beta}(S_m)) $ via \eqref{eq:var} where $\hat{\beta}(S)$ are the regression coefficients of $X$ from regressing $Y$ on  $X, W_S$.
		\FOR{$i=1$ {\bfseries to} $M$}
		\STATE{Generate permutations $\pi_i$ uniformly.}
		\STATE Compute
		\begin{equation} \label{eq:Yhat}
			\hat{\Yb}(\pi_i) = \Wb \hat{\gamma}+\hat{\Nb}^{\pi_i}
		\end{equation}
		where $\hat{\Nb} = \Yb - \Wb \hat{\gamma} = (\hat n_{y,1}, \dots, \hat n_{y,\ell})^\top$ and
		\STATE $\hat{\Nb}^{\pi_i} = (\hat n_{y,\pi(1)}, \dots, \hat n_{y,\pi(\ell)})^\top.$
		\STATE Compute $\Vcal_i =\Vcal(\hat{\beta}^{(i)}(S_1), \dots, \hat{\beta}^{(i)}(S_m)) $ via \eqref{eq:var} where  $\hat{\beta}^{(i)}(S)$ is regression coefficients of $X$ from regressing $\hat{\Yb}(\pi_i)$ on  $X, W_S$.
		\ENDFOR
		\STATE {\bfseries output:} {p-value $p = (\sum_{i=1}^{M} \ind \lbrace \Vcal_i \geq \Vcal_0 \rbrace + 1)/ (M+1).$}
	\end{algorithmic}
\end{algorithm}

The following theorem shows how the type I error can be controlled, which thus justifies Algorithm~\ref{alg:PT}. In particular, it proves that $((\Yb, \Xb, \Wb)$, $(\hat{\Yb}(\pi_1), \Xb, \Wb)$, ..., $(\hat{\Yb}(\pi_M), \Xb, \Wb))$ is approximately {\it exchangeable}, where $\hat{\Yb}(\pi_i)$ is defined in \eqref{eq:Yhat}.
A similar argument for permutation tests also appears in \cite{berrett2019conditional, barber2020testing}.

\begin{theorem}\label{cor:gaussian-case}
	Suppose that $N_y \sim \Ncal(0, \sigma_y^2)$ and $M$ permutations $\{\pi_i\}_{i=1}^M$ are generated uniformly. Given an estimation $\hat{\gamma}$ of $\gamma$ and the p-value $p$ obtained by Algorithm \ref{alg:PT}, under the null hypothesis $H_0$ in \eqref{eq:hypothesis}, the type I error is bounded from above by 
	\begin{equation*}
		\PP(p \leq \alpha \mid \Xb, \Wb , \gamma, \hat \gamma) \leq \alpha + \frac{ \sqrt{M} \|\Wb (\gamma -\hat{\gamma})\|}{2 \sigma_y}.
	\end{equation*}
\end{theorem}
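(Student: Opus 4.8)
The plan is to realize Algorithm~\ref{alg:PT} as a randomization test driven by a \emph{single} noise vector, to compare it against an oracle test that uses i.i.d.\ Gaussian residuals (which is exactly exchangeable, hence exactly valid), and to charge the gap between the two to a single multivariate–Gaussian total–variation distance. Throughout we argue conditionally on $\Xb,\Wb,\gamma,\hat\gamma$, so $\delta:=\Wb(\gamma-\hat\gamma)$ is a deterministic vector and (treating $\hat\gamma$ as independent of the test noise, e.g.\ obtained by sample splitting) $N_y\sim\Ncal(0,\sigma_y^2\Ib_\ell)$. Under $H_0$ we have $\beta=0$, whence $\Yb=\Wb\gamma+\Nb_y$ and the residual is $\hat\Nb=\Yb-\Wb\hat\gamma=\Nb_y+\delta\sim\Ncal(\delta,\sigma_y^2\Ib_\ell)$. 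The key structural observation is that \emph{every} statistic produced by the algorithm is the same fixed function of a permuted copy of this one residual: letting $h(n)$ denote the value of \eqref{eq:var} obtained by regressing the response $\Wb\hat\gamma+n$ on each $(\Xb,\Wb_{S_j})$, we have $\Vcal_0=h(\hat\Nb)$ (the $i=0$ term is the identity permutation, since $\Wb\hat\gamma+\hat\Nb=\Yb$) and $\Vcal_i=h(\hat\Nb^{\pi_i})$ for $i=1,\dots,M$. Hence the p-value is a deterministic function $p=\Psi(\hat\Nb,\pi_{1:M})$, where $\Psi$ permutes, applies $h$, and forms the rank statistic of Algorithm~\ref{alg:PT}.

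Next I would introduce the oracle p-value $\check p=\Psi(\check\Nb,\pi_{1:M})$, obtained by the \emph{identical} processing $\Psi$ but fed with a genuinely exchangeable residual $\check\Nb\sim\Ncal(0,\sigma_y^2\Ib_\ell)$ (i.i.d.\ entries), using the same uniformly drawn $\pi_1,\dots,\pi_M$. Because $\check\Nb$ is exchangeable, $\pi_0=\mathrm{id}$, and the $\pi_i$ are i.i.d.\ uniform and independent of $\check\Nb$, the collection $(h(\check\Nb^{\pi_0}),\dots,h(\check\Nb^{\pi_M}))$ is exchangeable; by the standard randomization argument (Appendix~\ref{apd:permutation-test}) the rank p-value then satisfies $\PP(\check p\le\alpha)\le\alpha$ exactly, for \emph{any} fixed statistic $h$. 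This is the step where the permutation randomness is genuinely used: one cannot condition on the $\pi_i$, since conditionally the actual and oracle residual laws are supported on different affine subspaces; exchangeability must be obtained only after averaging over the uniform permutations.

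Finally I would charge the difference to total variation. Since $p$ and $\check p$ are the same measurable map $\Psi$ of $(\text{residual},\pi_{1:M})$, and the permutations have identical law and are independent of the residual in both experiments, the data-processing inequality gives
\begin{equation*}
\bigl|\PP(p\le\alpha)-\PP(\check p\le\alpha)\bigr|\le d_{\mathrm{TV}}\!\bigl(\Ncal(\delta,\sigma_y^2\Ib_\ell),\,\Ncal(0,\sigma_y^2\Ib_\ell)\bigr).
\end{equation*}
Using the closed form $d_{\mathrm{TV}}(\Ncal(\mu_1,\sigma_y^2\Ib),\Ncal(\mu_2,\sigma_y^2\Ib))=2\Phi\!\bigl(\tfrac{\|\mu_1-\mu_2\|}{2\sigma_y}\bigr)-1$ and the elementary bound $2\Phi(x)-1\le x$, the right-hand side is at most $\|\delta\|/(2\sigma_y)=\|\Wb(\gamma-\hat\gamma)\|/(2\sigma_y)$. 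Combining with $\PP(\check p\le\alpha)\le\alpha$ and $\sqrt M\ge1$ yields the claimed bound (indeed a sharper one, without the $\sqrt M$).

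I expect the main obstacle to be the second step: making the exchangeability of the oracle statistics rigorous without conditioning on the permutations, and verifying the reduction to a single Gaussian TV distance — in particular checking that $\Vcal_0$ really is the identity-permutation member of the same family, so that all $M+1$ statistics are driven by one residual vector rather than by $M$ separately perturbed ones (the latter, cruder accounting is precisely what inflates the stated bound by the conservative factor $\sqrt M$). A secondary point to pin down is the conditioning on $\hat\gamma$: if $\hat\gamma$ is estimated from the same $\Yb$, one must either invoke sample splitting or argue that the conditional law of $\Nb_y$ remains $\Ncal(0,\sigma_y^2\Ib_\ell)$, so that the Gaussian TV computation applies verbatim.
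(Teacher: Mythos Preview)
Your proposal is correct and in fact yields a strictly sharper bound than the paper's, without the $\sqrt{M}$ factor. The paper proceeds differently: it defines oracle permuted responses $\bar{\Yb}(\pi_i)=\Wb\gamma+\Nb_y^{\pi_i}$ using the \emph{true} $\gamma$, argues that $(\Yb,\bar{\Yb}(\pi_1),\dots,\bar{\Yb}(\pi_M))$ is exchangeable (so the oracle p-value is exactly valid), and then bounds the total-variation distance between the \emph{joint} laws of the $M$-tuples $(\bar{\Yb}(\pi_1),\dots,\bar{\Yb}(\pi_M))$ and $(\hat{\Yb}(\pi_1),\dots,\hat{\Yb}(\pi_M))$. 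Treating these as $M\ell$-dimensional Gaussians, it obtains a KL divergence of order $M\|\Wb(\gamma-\hat\gamma)\|^2/\sigma_y^2$ and applies Pinsker's inequality, which is precisely what produces the $\sqrt{M}$.

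Your argument avoids this inflation by recognizing that all $M+1$ statistics---including $\Vcal_0$, since $\Yb=\Wb\hat\gamma+\hat\Nb$---are images under the \emph{same} deterministic map of the single $\ell$-dimensional residual $\hat\Nb$ (together with the independent permutations). This lets you compare laws at the source, $\Ncal(\delta,\sigma_y^2\Ib_\ell)$ versus $\Ncal(0,\sigma_y^2\Ib_\ell)$, and use the exact Gaussian TV formula rather than KL plus Pinsker. Both approaches rest on the same Freedman--Lane exchangeability of the oracle test, but yours is more elementary and tighter. One small quibble: your remark that ``conditionally the actual and oracle residual laws are supported on different affine subspaces'' is not quite right---both residuals have absolutely continuous Gaussian laws on all of $\RR^\ell$---but this does not affect the argument; the point you actually need is only that validity of the oracle test requires averaging over the $\pi_i$, while the TV comparison can safely condition on them. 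Finally, your flagged concern about conditioning on $\hat\gamma$ applies equally to the paper's proof: both implicitly assume $\hat\gamma$ is held fixed independently of $\Nb_y$ (e.g.\ via sample splitting), so that $\Nb_y\mid\hat\gamma$ remains $\Ncal(0,\sigma_y^2\Ib_\ell)$.
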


Hence,  the type I error is small whenever the prediction error is small.  Lasso, principal components regression, variable subset selection, and partial least squares all can be used to get the residuals in the high dimensional setting. In particular, ridge regression and model averaging are more desirable due to their superior ability to minimize the prediction error.

\section{Numerical studies} \label{sec:experiments}

In this section, synthetic and real datasets are used to gain additional insight into the performance of our method. Further, we compare our proposed method against existing tests for hypothesis \eqref{eq:hypothesis}. The code for all experiments is given in the supplementary material.

\begin{figure*}[ht]
	\centering
	\includegraphics[width=0.32\textwidth]{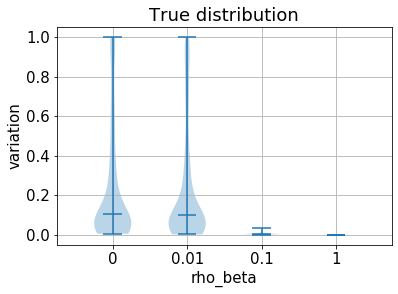}
	\includegraphics[width=0.32\textwidth]{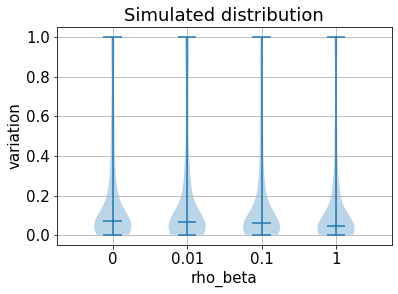}
	\includegraphics[width=0.32\textwidth]{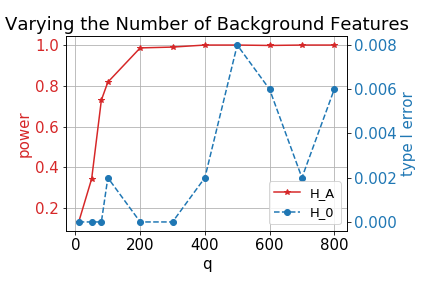}
	\caption{Left: the true distribution of the variation $\Vcal$. Middle: the null distribution generated by the permutation test. Right: The Power and Type I error with varying the number of background features.} \label{fig:sem1}
\end{figure*}

\begin{table*}[ht]
	\centering
	\caption{ $q=1000, r=5, \ell =100, k=10, m = 1000$ and $\rho_\gamma=10, \rho_\beta = 0, 1.5$. } \label{table:sem1}
	\resizebox{0.8\textwidth}{!}{
		\begin{tabular}{cccccccccccc}
			\toprule
			&          & \multicolumn{5}{c}{Setting 1}        & \multicolumn{5}{c}{Setting 2}        \\
			\cmidrule(lr){3-7} \cmidrule(lr){8-12}
			& $\alpha$ & \textbf{RS}    & JS    & BM    & FL    & DR    & \textbf{RS}    & JS    & BM    & FL    & DR    
			\\ 
			\cmidrule(lr){1-1} \cmidrule(lr){2-2} \cmidrule(lr){3-7} \cmidrule(lr){8-12}
			type I           & 0.05     & \textbf{0.053} & 0.172 & 0.026 & 0.001 & 0.001 & \textbf{0.040} & 0.177 & 0.033 & 0.017 & 0.024 \\
			error            & 0.01     & \textbf{0.010} & 0.140 & 0.007 & 0.000 & 0.000 & \textbf{0.009} & 0.146 & 0.008 & 0.005 & 0.005 \\ 
			\cmidrule(lr){1-1} \cmidrule(lr){2-2} \cmidrule(lr){3-7} \cmidrule(lr){8-12}
			power    & 0.05     & \textbf{0.817} & 0.126 & 0.799 & 0.029 & 0.025 & \textbf{0.801} & 0.138 & 0.728 & 0.519 & 0.567 \\
			& 0.01     & \textbf{0.623 }& 0.159 & 0.628 & 0.02  & 0.017 & \textbf{0.589} & 0.163 & 0.522 & 0.342 & 0.345 \\ 
			\bottomrule
	\end{tabular}}
\end{table*}

\subsection{Simulations}\label{sec:simulations}

{\bf Data generating procedure:} The data $(\Yb, \Xb, \Wb)$ is generated via SEM~\eqref{eq:se1} with the following parameters:
(1) Draw the entries of $\Ab, \Bb$ from independent Gaussian distributions $\Ncal(0,1/d)$ and $\Ncal(0,1/q)$, respectively.
(2) Draw the parameters $\beta, \gamma$ from the uniform distribution on the sphere with radius $\rho_\beta, \rho_\gamma$, respectively.
(3) Draw $\ell$ samples of each $N_{\square}$ independently from the standard multivariate Gaussian distribution $\Ncal(0, \Ib)$ for $\square=z,w,x,y$.

{\bf Random selection procedure  (RS): }
If we only have a single dataset with a large number of background features $ \Wb$ rather than multiple datasets, we artificially generate multiple datasets $\{(\Yb, \Xb, \Wb_{S_j})\}_{j=1}^m$ by uniformly sampling $m$ subsets $S_j \subset \{1,\dots, q\}$ with size $|S_j| = k$. We call this method  random selection procedure.

{\bf Algorithm~\ref{alg:PT}+\ref{alg:model-averaging}+RS:} To apply the permutation test in the numerical studies, we 
(i) use the {\it random selection procedure} to create $\{(\Yb, \Xb, \Wb_{S_j})\}_{j=1}^m$,
(ii) execute Algorithm~\ref{alg:model-averaging} in Appendix~\ref{apd:model-averaging} to estimate $\gamma$, and (iii) run Algorithm~\ref{alg:PT} to test the hypothesis in \eqref{eq:hypothesis}. For the ease of notation, we often refer to RS when we mean the procedure Algorithm~\ref{alg:PT}+\ref{alg:model-averaging}+RS.

{\bf Basis results: } We generate data $(\Yb, \Xb, \Wb)$ by using the {\it data generating procedure} with the following parameters: $d=1$, $q=1000$, $r=5$, $\ell=100$, $\rho_\gamma =10$ and $\rho_\beta = 0, 0.01, 0.1, 1.0$. For each $\rho_\beta$, we repeated the experiment 1000 times. Note that, $\beta$ and $\gamma$ are only drawn once and were then fixed for all 1000 repetitions. The only randomness comes from generating new noises $N_y$ for each experiment. The result of runing {\it Algorithm~\ref{alg:PT}+\ref{alg:model-averaging}+RS} with $m = 100, M=1000, k=10$
is shown in Figure~\ref{fig:sem1} (left and middle). We can see in the left figure that the distribution of $\Vcal$ concentrates at zero for $\rho_\beta=1$, which aligns with our findings in Theorem~\ref{thm:main-2}.
The middle figure in Figure~\ref{fig:sem1} shows that the null distribution generated by the permutation test is similar to the true null distribution for different $\rho_\beta$, which verifies Theorem~\ref{cor:gaussian-case} empirically.

{\bf Results of varying parameters: } Again, we use the {\it data generating procedure} but with the parameters $d=3$, $r=5$, $\ell=300$, $\rho_\gamma =2.5$, and {\it Algorithm~\ref{alg:PT}+\ref{alg:model-averaging}+RS} with $m =40, M=400, k=3$ to investigate the impact of different choices of $q$. The result is shown in Figure~\ref{fig:sem1} right, thereby we repeat the experiments 500 times and take the average for each point in the figure.
We can see that increasing the number of background features can increase the power that approaches one, which aligns with Theorem~\ref{thm:main-res}. We also conduct experiments with varying $\rho_\beta$, $d$, and $k$ in Appendix~\ref{apd:additional-experiments}.

{\bf Comparisons: }
Next, we compare our proposed approach (Algorithm~\ref{alg:PT}+\ref{alg:model-averaging}+RS) to existing methods. Thereby, we use the {\it data generating procedure} to produce 1000 data for the null and alternative hypothesis with $q=1000$ (see caption in Table~\ref{table:sem1} for all hyperparameters), where we consider two settings:

\noindent
{\it Setting 1:} All $W_1, \dots, W_{1000}$ are accessible.

\noindent
{\it Setting 2:} $W_1$, $\dots$, $W_{700}$ are latent, and only $W_{701}$, $\dots$, $W_{1000}$ are accessible.

In this comparison, we run {\it Algorithm~\ref{alg:PT}+\ref{alg:model-averaging}+RS} with $m = 1000$, $k=10$ and $M =1000$, referred to as "RS".
The following methods are included in our experiment:

(i)"JS" is a test for non-confounding proposed in \cite{janzing2018detecting} and particularly designed for the case that all elements of $W$ are hidden. Under an ICA-based model, the authors derive a test statistic for the null hypothesis $\rho_\beta/\rho_\gamma=0$ and a simple approximation of the null distribution of their proposed statistic.

\noindent
(ii) "BM" is a high-dimensional test based on ridge projections, proposed in \cite{buhlmann2013statistical}. This test is based on a bias-corrected estimation and an asymptotic upper bound of its distribution.

\noindent
(iii) "FL" is the Freedman-Lane HD test proposed by \cite{hemerik2020permutation} with the test statistic based on the generalized partial correlation.  

\noindent
(iv) "DR" is the Double Residualization method in \cite{hemerik2020permutation}, which residualizes both $Y$ and $X$ and tests the sample correlation.

Note that FL and DR use ridge regression for  $\gamma$, while our method uses model averaging (Algorithm~\ref{alg:model-averaging}).  BM, FL, and DR are designed for the conditional independence tests and do not consider the presence of unobserved confounders. They only use the assumption of linear models and do not restrict any generating processes of $\gamma$. In contrast, RS and JS aim to correct or detect hidden confounding and rely on strong assumptions.

The summary is shown in Table~\ref{table:sem1}. We can see that RS shows the highest power and its type I error is closest to $\alpha$ in both settings. It may be counter-intuitive that FL and DR work better in Setting 2, but since $\gamma$ is drawn from the prior with zero mean, the confounding effect will be offset when $q$ is large. Thus, the prediction error increases in Setting 1 due to this particular prior. We would like to remind readers that BM, FL, and DR are designed for more general model settings. JS does not use any information about $W$ and does not take into account noise terms $N_y$, which may explain the bad performance of JS. This observation also shows the importance of permutation tests that do not rely on the model's parameters.
Due to the limit of space, we include additional experiments in Appendix~\ref{apd:additional-experiments} where we also examine more settings and model parameters.


\subsection{Real datasets} \label{sec:real-dataset}

This section presents an analysis for a real dataset about the educational attainment of teenagers~\cite{rouse1995democratization}. We work with simplified data as provided in \cite{stock2012introduction}.\footnote{Readers can download data and see the documentation in \url{https://www.princeton.edu/~mwatson/Stock-Watson_3u/Students/Stock-Watson-EmpiricalExercises-DataSets.htm}}
In this data, for 4739 pupils from approximately 1100 US high schools in different states, 13 attributes were recorded in 1980. We split the whole dataset into several sub-datasets based on the attribute of the state hourly wage in manufacturing ("stwmfg80").
In this experiment, we want to examine how different attributes of pupils affect the composite test score denoted by $Y$. The attribute "years of education completed", the total years that students get their final degree, and "tuition", the average intuition in four-year state college for each state, are dropped due to the time order and colinearity, respectively. 
Based on Assumption~\ref{amp:causal}, our method requires that causal influences of $X$ on $Y$ are invariant across different environments. Thus, the attributes we choose to test for potential causal influence are: (1) The boolean information of whether the parents are college graduates, the corresponding features are denoted by "momcoll" and "dadcoll", and (2) The distance from a four-year college to students' school, denoted by "dist".
We only use those sub-datasets whose sample size is more than 70. As a result, 11 attributes, 3908 samples, and 20 sub-dataset remain in this analysis. Other attributes include as background features in our analysis are gender ("Female"), race ("black", "Hispanic"), scores on relevant achievement tests ("bytest"), whether the family income is larger than 25,000 ("income"), whether the family owns a house ("own home"), whether the student's school is in an urban area ("urban") and the unemployment rate of the country where the school is located ("cue80"). 

We perform our Algorithm~\ref{alg:PT}+\ref{alg:model-averaging} with $M=1000$ and run the ordinary least squares (OLS) regression for baseline comparison.
For the complete result of the ordinary linear regression, see Appendix~\ref{apd:real-data}.
The result is summarized in Table~\ref{tab:real-data}. We can see for OLS that all features are statistically significant as those p-values are extremely small. This result is not reasonable since "dist" should not causally influence students' composite test scores. Indeed, it is highly possible that some hidden confounders exist between "dist", and another attribute as "dist" is a proxy for the type of schools. In contrast to OLS, our proposed method produces a large p-value for "dist".
On the other hand, "momcoll" and "dadcoll" can influence composite test scores in many aspects. For example, the parents who are college graduates may emphasize education more. Both methods yield a small p-value for  "momcoll" and "dadcoll", which verifies our reasoning. Note that we test "momcoll" and "dadcoll" together in our proposed method.
Even though we do not know the ground truth in this case,  the results from Algorithm~\ref{alg:PT} are more plausible than OLS regression because of potentially hidden common causes.

\begin{table}[tbh]
	\caption{p-values conducted by OLS and our proposed procedure} \label{tab:real-data}
	\centering
	\resizebox{0.4\textwidth}{!}{
		\begin{tabular}{ccccc}
			\toprule
			Method & momcoll & dadcoll   & dist  \\
			\cmidrule(lr){1-1} \cmidrule(lr){2-3} \cmidrule(lr){4-4}
			Algorithm~\ref{alg:PT}               & \multicolumn{2}{c}{0.0}                                    & 0.970                               \\
			OLS                   &     3.10e-09      & 1.36e-18 & 1.08e-06   \\
			\bottomrule
	\end{tabular}}
\end{table}

\section{Conclusions and discussions}

We proposed a procedure to identify the linear influence of a feature on a target variable in a scenario with multiple features as background conditions. 
We showed that stability of regression coefficients with respect to different selections of background variables 
(formalized by our stability statistic $\Vcal$) indicates that the statistical association relies on 
a direct causal influence, as opposed to hidden confounders.
Although assumption~\ref{amp:causal} might seem very restrictive and needs to be carefully examined in practice, one should be aware of the difficulty of inferring causal relations from observational data in the presence of hidden confounders. Hence, it is natural to postulate assumptions about the underlying generation process to be able to prove correctness of methods which hopefully work also under more general circumstances. Since our model is based on a large number 
of possibly correlated confounders it complements previous studies using stability of regression coefficients as indicators for associations to be causal.

\newpage
\appendix
\onecolumn
\section{Permutation Tests for the conditional independence test}\label{apd:permutation-test}
This section briefly introduces permutation tests~\cite{fisher1936coefficient, anderson1999empirical} with a focus on conditional independence. Permutation tests provide exact type I error control and are robust to misspecified assumptions such as normality and homoscedasticity.

Exchangeability is the crucial concept for permutation tests. Formally, given a finite or infinite sequence of random variables $V = (V_1, V_2, \dots)^\top$, we say that $V$ is {\it exchangeable} if for any finite permutation $\pi$ of the set of indices $\{1, 2, \dots\}$ we have
\begin{equation*}
	V \sim V^\pi = (V_{\pi(1)}, V_{\pi(2)}, \dots).
\end{equation*}
For example, $\Nb_y$ is exchangeable since the assumption of i.i.d. samples implies
\begin{equation*}
	\Nb_y \sim \Nb_y^\pi=(n_{y, {\pi(1)}}, \dots, n_{y, {\pi(\ell)}})^\top.
\end{equation*}

Before discussing the conditional independence test, we first recap how to conduct the independence test in the simplest case as follows.
Recall that SEM \eqref{eq:se1} is used for the data generating process.
Consider $q=0$ and the case $H_0: Y \independent X$. Given $M$ permutations $\{\pi_i\}_{i=1}^M$ of $\lbrace 1,\dots, \ell\rbrace$ generated uniformly, $((\Yb, \Xb), (\Yb, \Xb^{\pi_1}), \dots, (\Yb, \Xb^{\pi_M}))$ is exchangeable by the assumption of i.i.d. samples.
Given any valid statistic $T(\Yb, \Xb)$ such as the correlation between $\Yb$ and $\Xb$, $p$-values are of the form
\begin{equation*}
	p_0 = \frac{\sum_{i=1}^{M} \ind \lbrace T(\Yb, \Xb^{\pi_i}) \geq T(\Yb, \Xb) \rbrace + 1}{M+1}.
\end{equation*}
where $\ind$ is the indicator function.
Then it holds for permutation tests that the exact type I error can be bounded as follows by the exchangeability, i.e.,
\begin{equation*}
	\PP(p_0<\alpha|H_0, \Yb, \Xb) < \alpha
\end{equation*}

However, if $q>0$ and $X, W$ are correlated, $((\Yb, \Xb^{\pi_1}, \Wb), \dots, (\Yb, \Xb^{\pi_M}, \Wb) )$ is not exchangeable due to the correlation between $X$ and $W$. The confounding effect must be considered or removed to fix this issue, and several different permutation tests are proposed. The following sections discuss two types of permutation tests for high-dimensional conditional independence tests. See \citet{winkler2014permutation} for more different types of permutation tests.

\subsection{Freedman-Lane permutation method}

Even if we cannot simply permute indices of $\Xb$, we note that $\Nb_y$ is exchangeable. As a result, if the null hypothesis $H_0: Y \independent X \mid W$ holds, $((\Yb, \Xb, \Wb), (\bar{\Yb}(\pi_1), \Xb, \Wb), \dots, (\bar{\Yb}(\pi_M), \Xb, \Wb))$ is exchangeable where $\bar{\Yb}^{\pi_i} = \Wb \gamma  + \Nb_y^{\pi_i}$ and $\pi_i$ is drawn uniformly. This is called Freedman-Lane permutation method \cite{freedman1983nonstochastic} and requires the estimation of the nuisance parameters $\gamma$. Every high-dimensional regression method can be combined with this permutation method. For instance, \cite{hemerik2020permutation} proposed the ridge regression to estimate $\gamma$:
\begin{equation}
	\begin{split}
		\hat{\gamma}_{\text{ridge}} =& \argmin_\gamma \left( \|\Yb - \Wb \gamma\|^2+ \lambda \|\gamma\|^2 \right).
	\end{split}
\end{equation}
Then $\hat{\Nb}_y = \Yb - \Wb \hat{\gamma}_{\text{ridge}}$ is approximated exchangeable, so we can generate copies of $\Yb$ by $\hat{\Yb}(\pi_i) = \Wb \hat{\gamma}_{\text{ridge}}  + \hat{\Nb}_y^{\pi_i}$ with $M$ permutations $\{\pi_i\}_{i=1}^M$. Given any valid statistic $T(\Yb, \Xb, \Wb)$ such as the partial correlation between $\Yb$ and $\Xb$ given $\Wb$, $p$-values are of the form
\begin{equation*}
	p = \frac{\sum_{i=1}^{M} \ind \lbrace T(\hat{\Yb}(\pi_i), \Xb, \Wb) \geq T(\Yb, \Xb, \Wb) \rbrace + 1}{M+1}.
\end{equation*}
Then we have the approximated type I error control:
\begin{equation*}
	\PP(p<\alpha|H_0, \Yb, \Xb, \Wb) < \alpha + \text{an approximation error},
\end{equation*}
where the approximation error is due to the fact that we used $\hat{\gamma}_{\text{ridge}}$ instead of ${\gamma}$.
See Proposition~\ref{cor:gaussian-case} for an example of the approximation error.

\subsection{The conditional permutation tests} \label{apd:conditional-permutation-tests}
To correct the confounding effect, \citet{berrett2019conditional} proposed the conditional permutation test (CPT), which utilizes the conditional distribution of $X$ given $W$ that is assumed to be known. They argued that in semi-supervised learning, unlabelled data $(X, W)$ are plentiful, and labeled data $(Y, X, W)$ are rare. As a result, to accurately estimate the conditional distribution of $X$ given $W$ is more likely but to test for independence with $Y$ remains challenging due to the limited sample size of the labeled data.
Instead of using independent noises as the Freedman-Lane permutation method, CPT generates permutations $\{\pi_i\}_{i=1}^M$ non-uniformly with the probability
\begin{equation} \label{eq:CPT-permutation-dis}
	\PP \left(\pi_{j} =\pi \mid \Yb, \Xb, \Wb \right)= \frac{q\left(\Xb^{\pi} \mid \Wb\right)}{\sum_{\pi^{\prime} \in \Scal_n} q\left(\Xb^{\pi^{\prime}} \mid \Wb\right)}
\end{equation}
where $q(\Xb|\Wb)$ is the conditional density of $\Xb$ given $\Wb$ and $\Scal_n$ is the permutation group over $\lbrace 1,\dots, \ell \rbrace$. Then Theorem 1 in \cite {berrett2019conditional} showed that $((\Yb, \Xb, \Wb), (\Yb, \Xb^{\pi_1}, \Wb), \dots, (\Yb, \Xb^{\pi_M}, \Wb))$ is exchangeable. For sampling permutations from the distribution \eqref{eq:CPT-permutation-dis}, they implemented Markov chain Monte Carlo sampler .

\section{Model Averaging} \label{apd:model-averaging}

This section introduces model averaging within the frequentist paradigm. Model averaging is an approach of averaging over the estimators for a set of candidate models using predefined or data-adaptive weights.
Formally, under the null hypothesis \eqref{eq:hypothesis}, we are concerned with this model:
\begin{equation}\label{eq:se3}
	y_i \leftarrow  \gamma^\top w_i + n_{y,i}, \text{ for $i=1,\dots, \ell$.}
\end{equation}
Here $n_{y,i}$ is the independent noise such that $\EE n_{y,i}=0$ and $\EE n_{y,i}^2 = \sigma_y^2$ for all $i$ . Given data $(y_i, x_i, w_i)$ consist of $\ell$ i.i.d. samples generated from \eqref{eq:se3}. We write $\Yb=(y_1, \dots, y_\ell)^\top \in \RR^\ell$, $\Xb = (x_1, \dots, x_\ell)^\top \in \RR^{\ell \times d}$, $\Wb=(w_1,\dots, w_\ell)^\top \in \RR^{\ell \times q}$. Given $m$ subsets $\{S_i\}_{i=1}^m$, the estimator $\hat{\gamma}(S_j)$ for $\gamma$ via the $j$-th submodel is
\begin{equation} \label{eq:reg-coef-finite}
	\hat{\gamma}(S_j)_{S_j} =
	\left(
	\Wb_{S_j}^\top \Wb_{S_j}
	\right)^{-1} \Wb_{S_j}^\top \Yb, \ \ \ \hat{\gamma}(S_j)_{S_j^c}=0
\end{equation}
and the averaging estimator of $\gamma$ with respect to a weight vector $w=(w_1,\dots,w_m)^\top$ is
\begin{equation*}\label{eq:ave-estimator}
	\hat{\gamma}(w) = \sum_{i=1}^{m} w_i \hat{\gamma}(S_i),
\end{equation*}
where the weight vector $w$ lies in the unit simplex $\Wcal$ in $\RR^m$, i.e.
\begin{equation*}
	\Wcal =\left\lbrace w \in [0,1]^M: \sum_{i=1}^{m} w_i =1 \right\rbrace .
\end{equation*}
In our paper, we only consider the uniform weighting, i.e. $w_i = w_j$ for all $i,j$ and state our model averaging procedure in Algorithm~\ref{alg:model-averaging}.

\begin{algorithm}[ht]
	\caption{Model Averaging} \label{alg:model-averaging}
	\begin{algorithmic}
		\STATE {\bfseries input:} $m$ datasets $\{(Y, X, W_{S_j})\}_{j=1}^m$, a estimation of $\gamma$
		\FOR{$j=1$ {\bfseries to} $M$}
		\STATE  Compute the estimation $\hat{\gamma} (S_j)$ of coefficients $\gamma$ by regressing $\Yb$ on $\Wb_{S_j}$ via \eqref{eq:reg-coef-finite}
		\ENDFOR
		\STATE {\bfseries output:} {$\hat \gamma = \frac{1}{m} \sum_{j=1}^{m} \hat{\gamma}(S_j)$}
	\end{algorithmic}
\end{algorithm}

However, it is possible to implement other data-adaptive weights to improve the accuracy of the estimator of $\hat{\gamma}(w) $. As we stated in Theorem~\ref{cor:gaussian-case}, an accurate prediction can decrease the type I error. We will discuss some popular model averaging strategies. See \cite{zhang2015consistency} for some consistency results for those strategies.

\subsection{Smoothed AIC and smoothed BIC}
In the $j$-th candidate model $\Wb_{S_j}$ with the size $k_j =|S_j|$, the AIC score is
\begin{equation*}
	\operatorname{AIC}_j= \ell \log \hat{\sigma}_j^2 + 2 k_j
\end{equation*}
and the BIC score is
\begin{equation*}
	\operatorname{BIC}_j = \ell \log \hat{\sigma}_j^2 + 2 k_j  \log (\ell),
\end{equation*}
where $\hat{\sigma}_j^2=\ell^{-1} \|\Yb- W \hat{\gamma}(S_j)\|^2$. We define the weights from AIC and BIC \cite{hjort2003frequentist}:
\begin{equation}\label{eq:w-AICBIC}
	\hat{w}_{\operatorname{xIC}_m, i} = \frac{\exp(-\operatorname{xIC}_i/2)}{\sum_{j=1}^{m} \exp(-\operatorname{xIC}_j/2)}, \ \ \ i=1,\dots, m,
\end{equation}
where $\operatorname{xIC}_i$ is the AIC or BIC score under the $i$-th candidate model. The averaging estimators combined by weights in \eqref{eq:w-AICBIC} are commonly called smoothed AIC (S-AIC) or smoothed BIC (S-BIC) estimators

\subsection{MMA}

Let $\Hb(\wb) = \sum_{j=1}^{m} w_j \Wb_{S_j} (\Wb_{S_j}^\top \Wb_{S_j})^{-1} \Wb_{S_j}^\top$ and $k=(|S_1|,\dots,|S_m|)^\top$. \citet{hansen2007least} proposed choosing weights by minimzing the Mallow criterion  
\begin{equation*}
	\Ccal(w) = \| [\Ib_n - \Hb(w)] \Yb \|^2+2 \hat{\sigma} w^\top k
\end{equation*}
where
\begin{equation*}
	\hat{\sigma}=(\ell-|S_{j^\star}|)^{-1}\|\Yb-W\hat{\gamma}(S_{j^\star})\|, \text{ and } j^\star=\arg \max_j |S_j|.
\end{equation*}
Define the empirical Mallow weight as $\hat{w}_{\operatorname{MMA}}=\arg \min_{w} \Ccal(w)$ which has several good theoretical properties. For example, it can be shown that the mallow criterion is the unbiased of squared error plus a constant  (Lemma 3, \cite{hansen2007least} ), i.e.
\begin{equation}
	\EE \Ccal(w) = \left( W \hat{\gamma}(w) - W \gamma \right)^\top \left( W \hat{\gamma}(w) - W \gamma \right) + \ell \sigma_y^2.
\end{equation}
Thus, the empirical Mallow weight $\hat{w}_{\operatorname{MMA}}$ asymptotically minimizes the squared error (Theorem 1, \citet{hansen2007least}). Moreover, the weight vector obtained by Mallow's criterion has a sparsity property in the sense that a subset of models receives exactly zero weights \cite{feng2020sparsity}.

\section{Normally distributed Prior}\label{apd:proof-normal}

This section our main results for normally distributed prior. For the sake of simplicity, we first impose two conditions, then show general case in the next section. To begin with, let $r=1$, and $\Ab$ and $\Bb$ reduce to $a:=\Ab$ and $b :=\Bb$, respectively. Then, assume $\gamma$ is normally distributed, i.e. 
\begin{equation} \label{eq:Bayesian}
	\gamma \sim \Ncal\left( 0, (\rho_\gamma^2/q) \Ib_q \right).
\end{equation}


Now we are ready to present our first main result addressing the equivalence between causation and stable regression coefficients. In particular, we prove that the cause $X$ can be identified as long as $\|\beta\|$ or $q$ is large enough.  Note that all results are established in the infinite sample limit. That is, the randomness here only comes from the prior \eqref{eq:Bayesian}.

\begin{theorem}\label{thm:main}
	Let  $S_1, \dots , S_m \subsetneq \{1,\dots , q\}$. Define $\Sigmab=[\Sigma_{i j}]_{i,j=1}^m \in \mathbb{R}^{m \times m}$ by
	\begin{equation} \label{eq:sigmab}
		\Sigma_{i j} = \frac{ \|a_{S_i^c \bigcap S_j^c}\|^2}{(1+\|b\|_x^2 +\|a_{S_i}\|_{{S_i}}^2)(1+\|b\|_x^2 +\|a_{S_j}\|_{{S_j}}^2)}.
	\end{equation}
	where $\|b\|_x^2  = b^\top \Db_x b$, $\|a_S\|_{S}^2 = a_S^\top \Db_{S} a_S$. Assume that \eqref{eq:Bayesian} holds, and $\Sigmab$ has full rank with eigendecomposition $\Pb \diag(\lambda_1, \dots, \lambda_m) \Pb^\top$. 
	Let $e^m=(1/\sqrt{m},\dots, 1/\sqrt{m})^\top$.
	
	\noindent
	(a) We have two cases: if $\rho_\beta = 0$, it holds that
	\begin{equation}
		\Vcal\left(\hat{\beta}(S_1), \dots, \hat{\beta}(S_m)\right) \sim 1-\frac{\left|\sum_{i=1}^m c_i \lambda_i^{1/2} g_i \right|^2}{\sum_{i=1}^m \lambda_i g_i^2};
	\end{equation}
	if $\rho_\gamma^2/\rho_\beta^2 \rightarrow 0$, we have $  \EE\Vcal(\hat{\beta}(S_1), \dots, \hat{\beta}(S_m)) = o(1)$. $(c_1,\dots, c_m) = (e^m)^\top \Pb $ and $(g_1, \dots, g_m)^\top \sim \Ncal(0, \Ib_m)$.
	
	\noindent
	(b) Further, assume that 
	\begin{equation} \label{eq:confoundingSte}
		\tr(\Sigmab)/m  = o(q), \ \ \ \text{ as $q\rightarrow\infty$, }
	\end{equation}
	then,
	\begin{equation*}
		\Vcal(\hat{\beta}(S_1), \dots, \hat{\beta}(S_m)) \ \stackrel{p}{\rightarrow}
		\begin{cases}
			0 &\text{ if $\rho_\beta > 0$ } \\
			1 - \frac{(e^m)^\top \Sigmab e^m}{\tr(\Sigmab)}  & \text{ if $\rho_\beta = 0$ },
		\end{cases},
	\end{equation*}
	and $\frac{1}{m}\sum_{j=1}^{m} \hat{\beta}(S_j) \stackrel{p}{\rightarrow} \beta$.
\end{theorem}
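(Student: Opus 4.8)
The plan is to exploit that, for $r=1$, all biased coefficients are collinear, which collapses $\Vcal$ into a scalar ratio. First I would specialize \eqref{eq:Cb}: with $\Ab=a\in\RR^q$ and $\Bb=b\in\RR^d$ the central inverse becomes a scalar, so \eqref{eq:reg-coe-1} reduces to
\[
\hat\beta(S_j)=\beta+\xi_j\,u,\qquad u:=\Db_x^{-1}b,\qquad \xi_j:=\frac{a_{S_j^c}^\top\gamma_{S_j^c}}{1+\|b\|_x^2+\|a_{S_j}\|_{S_j}^2}.
\]
Thus the $m$ estimates lie on the line $\beta+\RR u$, and the only randomness is the scalar $\xi_j$. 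Because $\gamma\sim\Ncal(0,(\rho_\gamma^2/q)\Ib_q)$ and the $\gamma$-coordinates entering $\xi_i$ and $\xi_j$ overlap exactly on $S_i^c\cap S_j^c$, a direct covariance computation gives $(\xi_1,\dots,\xi_m)^\top\sim\Ncal(0,(\rho_\gamma^2/q)\Sigmab)$ with $\Sigmab$ precisely as in \eqref{eq:sigmab}. Writing $\bar\xi=m^{-1}\sum_j\xi_j$ and $\overline{\xi^2}=m^{-1}\sum_j\xi_j^2$, collinearity yields the key identity
\[
\Vcal(\hat\beta(S_1),\dots,\hat\beta(S_m))=\frac{(\overline{\xi^2}-\bar\xi^2)\,\|u\|^2}{\|\beta\|^2+2\bar\xi\,\beta^\top u+\overline{\xi^2}\,\|u\|^2},
\]
which also exhibits $\Vcal\in[0,1]$, since the denominator equals the numerator plus $\|\beta+\bar\xi u\|^2\ge 0$.

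For part (a) I set $\rho_\beta=0$, so $\beta=0$; then $\|u\|^2$ cancels and $\Vcal=1-\bar\xi^2/\overline{\xi^2}$, a scale-free functional of $\xi$. Diagonalizing $\Sigmab=\Pb\diag(\lambda_1,\dots,\lambda_m)\Pb^\top$ and writing $\xi=(\rho_\gamma/\sqrt q)\,\Pb\diag(\lambda_i^{1/2})\,g$ with $g\sim\Ncal(0,\Ib_m)$, substitution and cancellation of the common scale $\rho_\gamma/\sqrt q$ give the stated distributional identity with $(c_1,\dots,c_m)=(e^m)^\top\Pb$. For the second half, when $\rho_\beta>0$ the numerator of the key identity is $O_p(\rho_\gamma^2)$ while the denominator tends to $\rho_\beta^2$, so $\Vcal=O_p(\rho_\gamma^2/\rho_\beta^2)$ tends to $0$ in probability as $\rho_\gamma^2/\rho_\beta^2\to0$; since $0\le\Vcal\le1$, bounded convergence upgrades this to $\EE\Vcal=o(1)$.

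For part (b) the engine is that \eqref{eq:confoundingSte} kills the bias magnitude: $\EE\overline{\xi^2}=(\rho_\gamma^2/q)\,\tr(\Sigmab)/m=\rho_\gamma^2\cdot o(1)\to0$, hence $\overline{\xi^2}\stackrel{p}{\rightarrow}0$, and $\EE\bar\xi^2\le\EE\overline{\xi^2}\to0$ gives $\bar\xi\stackrel{p}{\rightarrow}0$ by Markov. Therefore $m^{-1}\sum_j\hat\beta(S_j)=\beta+\bar\xi\,u\stackrel{p}{\rightarrow}\beta$, which is claim (i). When $\rho_\beta>0$, the numerator of the key identity is at most $\overline{\xi^2}\|u\|^2\stackrel{p}{\rightarrow}0$ while the denominator tends to $\rho_\beta^2>0$, so $\Vcal\stackrel{p}{\rightarrow}0$.

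The delicate case is $\rho_\beta=0$ in part (b): there $\Vcal=1-\bar\xi^2/\overline{\xi^2}$ is scale-invariant, so \eqref{eq:confoundingSte} (a magnitude condition) cannot by itself pin down the limit, which is instead dictated by the \emph{correlation} structure of $\Sigmab$. Here I would return to the distributional identity of part (a) and show that, as $q\to\infty$ with the $S_j$ small relative to $q$, the matrix $\Sigmab$ is asymptotically rank one: $\Sigmab=\|a\|^2\,vv^\top+R$ with $v_j=1/(1+\|b\|_x^2+\|a_{S_j}\|_{S_j}^2)$ and $R$ of lower order than the leading term, so that $\lambda_1$ dominates $\sum_{i\ge2}\lambda_i$. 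In that regime the ratio $|\sum_i c_i\lambda_i^{1/2}g_i|^2/\sum_i\lambda_i g_i^2$ collapses to $c_1^2$, because the surviving randomness $g_1^2$ cancels between numerator and denominator, and $c_1^2=(e^m)^\top\Sigmab e^m/\tr(\Sigmab)$, yielding $\Vcal\stackrel{p}{\rightarrow}1-(e^m)^\top\Sigmab e^m/\tr(\Sigmab)$. I expect this eigenvalue-domination step — establishing that $R$ is negligible so that only the top mode survives and its randomness cancels — to be the main obstacle, as it is exactly the mechanism separating genuine asymmetric confounding from a true causal signal.
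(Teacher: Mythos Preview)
Your argument coincides with the paper's for all of part (a) and for the $\rho_\beta>0$ claims in part (b). The paper also reduces to the collinear decomposition $\hat\beta(S_j)=\beta+v_j\,\Db_x^{-1}b$ (its $v_j$ is your $\xi_j$), identifies $(v_1,\dots,v_m)\sim\Ncal(0,(\rho_\gamma^2/q)\Sigmab)$ via the same overlap computation, and expands numerator and denominator of $1-\Vcal$ exactly as in your ``key identity''. For $\rho_\beta=0$ in (a) it performs the same eigendecomposition and rotation-invariance substitution; for $\rho_\gamma^2/\rho_\beta^2\to0$ it uses the same moment bounds together with uniform integrability from $0\le\Vcal\le1$. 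In (b) it obtains $\bar\xi\stackrel{p}{\to}0$ and $\overline{\xi^2}\stackrel{p}{\to}0$ from \eqref{eq:confoundingSte} just as you do, and deduces the consistency of $m^{-1}\sum_j\hat\beta(S_j)$ and the vanishing of $\Vcal$ when $\rho_\beta>0$ in the same way.

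The divergence is in the $\rho_\beta=0$ case of (b). The paper does \emph{not} argue via an asymptotic rank-one structure of $\Sigmab$. It writes $1-\Vcal=|(e^m)^\top v|^2/\|v\|^2$ and passes directly to the limit $(e^m)^\top\Sigmab e^m/\tr(\Sigmab)$, i.e.\ to the ratio of the two means it has already computed; no eigenvalue-domination step appears. Your proposed route instead imports an extra hypothesis (``$S_j$ small relative to $q$'' so that $\Sigmab\approx\|a\|^2 vv^\top$) which is not part of the theorem; as a proof of the statement as written this is a gap, since you would only be establishing the conclusion under a strictly stronger premise. You are right that this is the delicate spot---condition \eqref{eq:confoundingSte} governs magnitude, not the correlation structure that drives the scale-free ratio---but the paper's own treatment here is a one-line assertion rather than a rank-one argument, so if you want to match the paper you should drop the eigenvalue-domination detour and invoke the ratio of expectations directly.
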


The first result states that the upper bound of the mean of $\Vcal$ is decreasing as $\rho_\beta/\rho_\alpha$ increases, while if $\|\beta\|=0$ we get some positive number drawn from some known distributions. The second result says as $q$ increasing, $\Vcal$ converges to zero if and only if $\rho_\beta>0$. As a result, since the distribution of $\Vcal$ under the null and alternative hypothesis are well separated from each other given large enough $\rho_\beta/\rho_\alpha$ or $q$, Theorem~\ref{thm:main} prove the idea shows that $\Vcal$ is a reasonable test statistic for testing whether $\|\beta\|$ is zero. $\Sigmab$ in \eqref{eq:sigmab} relates to the covariance matrix of $\hat{\beta}(S_j)$, so the condition \eqref{eq:confoundingSte} implies mild correlation among $\{\hat{\beta}(S_j)\}_{j=1}^m$ caused by confounders between $X$ and $W$. We provide two examples that show the sufficient condition for \eqref{eq:confoundingSte} is $\|a\|^2 = o(q) $ provided \eqref{eq:subset} holds in Appendix~\ref{apd:example}. 

To show our main results, we first provide some useful calculations. First, we define
\begin{equation} \label{eq:v-distribution}
	v =  (v_1, \dots, v_m) :=\left(\frac{a_{S_1^c}^\top \gamma_{S_1^c}}{1+\|b\|_x^2 + \|a_{S_1}\|_{{S_1}}^2},\dots, \frac{a_{S_m^c}^\top \gamma_{S_m^c}}{1+\|b\|_x^2 + \|a_{S_m}\|_{{S_m}}^2}\right).
\end{equation}
and $e^m:=(1/\sqrt{m},\dots, 1/\sqrt{m})^\top$ is a $m$-dimensional unit vector.
Then, $\gamma \sim \Ncal\left( 0, \frac{\rho_\gamma^2}{q} \Ib_q \right)$ yields $v \sim \Ncal(0, \frac{\rho_\gamma^2}{q} \Sigmab)$ where
\begin{equation*}
	\Sigma_{i j} = \frac{ \|a_{S_i^c \bigcap S_j^c}\|^2}{(1+\|b\|_x^2 +\|a_{S_i}\|_{{S_i}}^2)(1+\|b\|_x^2 + \|a_{S_j}\|_{S_j}^2)}.
\end{equation*}
Recall the regression coefficients $\hat{\beta}(S)$ of $Y$ on $X, W_S$ is
\begin{equation*}
	\hat{\beta}(S) = \beta + \frac{a_{S^c}^\top \gamma_{S^c} }{1 + \|b\|_x^2 + \|a_S\|_{S}^2 } \Db_x^{-1} b,
\end{equation*}
implying
\begin{equation} \label{eq:beta}
	\begin{split}
		\left\| \frac{1}{ m} \sum_{j=1}^m \hat{\beta}(S_j) \right\|^2 =  \left\| \beta + \frac{1}{ \sqrt{m}} (e^m)^\top v \Db_x^{-1} b \right\|^2 =\|\beta\|^2+ \frac{2}{\sqrt{m}} (e^m)^\top v \beta^\top\Db_x^{-1} b + \frac{1}{m}  |(e^m)^\top v |^2 \|\Db_x^{-1} b\|^2, \\
		\frac{1}{m} \sum_{j=1}^{m}\|\hat{\beta}(S_j)\|^2 = \frac{1}{m} \sum_{j=1}^{m}\|  \beta + v_j \Db_x^{-1} b\|^2 = \|\beta\|^2 +  \frac{2}{\sqrt{m}} (e^m)^\top v \beta^\top\Db_x^{-1} b + \frac{1}{ m} \|v\|^2 \|\Db_x^{-1} b\|^2.
	\end{split}
\end{equation}

Using a well-known result of normal distributions \cite{rencher2008linear} that
\begin{equation*} 
	\EE [g^\top \Hb g] = \tr [\Hb \Gb],  \ \ \
	\Var  [g^\top \Hb g]  =  2 \tr [\Hb \Gb \Hb \Gb ]
\end{equation*}
where $\Hb$ is a symmetric matrix and $g \sim \Ncal(0, \Gb)$, we know
\begin{equation} \label{eq:emv-mean}
	\EE \left[ \frac{1}{m}  |(e^m)^\top v |^2 \right]  = \frac{\rho_\gamma^2}{q} \frac{1}{m}  (e^m)^\top \Sigmab e^m, \ \ \
	\EE \left[ \frac{1}{ m} \|v\|^2 \right]= \frac{\rho_\gamma^2}{q} \frac{1}{m} \tr(\Sigmab) ,
\end{equation}
and
\begin{equation} \label{eq:emv-var}
	\begin{split}
		\Var \left[ \frac{1}{m}  |(e^m)^\top v |^2 \right] =& \frac{\rho_\gamma^4}{q^2} \frac{2}{m^2}  ((e^m)^\top \Sigmab e^m)^2 \leq  \frac{\rho_\gamma^4}{q^2} \frac{2}{m^2} \tr(\Sigmab^2), \\
		\Var \left[ \frac{1}{ m} \|v\|^2 \right]  =& \frac{\rho_\gamma^4}{q^2} \frac{2}{m^2}  \tr(\Sigmab^2).
	\end{split}
\end{equation}
In what follows, we will prove Theorem~ \ref{thm:main}.

\subsection{Proof of (a) in Theorem~\ref{thm:main}} \label{apd:a-thm-main}

If $\rho_\beta =0$, we have $\beta =0$ almost surely. Then, from \eqref{eq:beta},  we get
\begin{equation*}
	\left\| \frac{1}{m} \sum_{j=1}^m \hat{\beta}(S_j) \right\|^2  = \frac{1}{m}  |(e^m)^\top v |^2 \|\Db_x^{-1} b\|^2, \ \ \ 
	\frac{1}{m} \sum_{j=1}^{m}\|\hat{\beta}(S_j)\|^2 = \frac{1}{m} \|v\|^2 \|\Db_x^{-1} b\|^2.
\end{equation*}
Since $\Sigmab$ has eigenvalue decomposition $\Pb \Lambda \Pb^\top$ such that $\Lambda = \diag(\lambda_1,\dots, \lambda_m)$, the above equations yields that
\begin{equation}
	\begin{split}
		1-\Vcal  
		=& \frac{|(e^m)^\top v|^2}{\|v\|^2} \\
		=& \frac{v^\top \Sigmab^{-1/2}\Sigmab^{1/2} e^m (e^m)^\top \Sigmab^{1/2}\Sigmab^{-1/2}v}{v^\top \Sigmab^{-1/2} \Sigmab \Sigmab^{-1/2} v} \\
		\sim & \frac{g^\top \Pb \Lambda^{1/2} \Pb^\top e^m (e^m)^\top \Pb \Lambda^{1/2} \Pb^\top g}{g^\top \Pb \Lambda \Pb^\top g} \\
		\sim & \frac{g^\top  \Lambda^{1/2} \Pb e^m (e^m)^\top \Pb \Lambda^{1/2} g}{g^\top \Lambda g} \\
		=& \frac{\left|\sum_{i=1}^m c_i \lambda_i^{1/2} g_i \right|^2}{\sum_{i=1}^m \lambda_i g_i^2},
	\end{split}
\end{equation}
where $g \sim \Ncal(0, \Ib)$ and we have used rotation invariance of $g$, and $c = (c_1,\dots, c_m) = (e^m)^\top \Pb$.


For $\rho_\beta > 0$, using \eqref{eq:beta}, first we can write
\begin{equation*}
	\begin{split}
		1-\Vcal
		=&  \frac{\left\|\frac{1}{ m} \sum_{j=1}^m \hat{\beta}(S_j) \right\|^2 }{\frac{1}{m} \sum_{j=1}^{m}\|\hat{\beta}(S_j)\|^2}  \\
		=& 1 -   \frac{\frac{1}{m} \sum_{j=1}^{m}\|\hat{\beta}(S_j)\|^2 - \left\|\frac{1}{ m} \sum_{j=1}^m \hat{\beta}(S_j) \right\|^2 }{\frac{1}{m} \sum_{j=1}^{m}\|\hat{\beta}(S_j)\|^2} \\
		=& 1 -  \frac{\frac{1}{ m} \|v\|^2 \|\Db_x^{-1} b\|^2 -  \frac{1}{m}  |(e^m)^\top v |^2 \|\Db_x^{-1} b\|^2}{\|\beta\|^2 +  \frac{2}{\sqrt{m}} (e^m)^\top v \beta^\top\Db_x^{-1} b + \frac{1}{ m} \|v\|^2 \|\Db_x^{-1} b\|^2}   \\
		=& 1 -  \frac{\frac{1}{ m\rho_\beta^2} \|v\|^2 \|\Db_x^{-1} b\|^2 -  \frac{1}{m\rho_\beta^2}  |(e^m)^\top v |^2 \|\Db_x^{-1} b\|^2}{\frac{\|\beta\|^2}{\rho_\beta^2} +  \frac{2}{\sqrt{m}\rho_\beta^2} (e^m)^\top v \beta^\top\Db_x^{-1} b + \frac{1}{ m \rho_\beta^2} \|v\|^2 \|\Db_x^{-1} b\|^2}.
	\end{split}
\end{equation*}
Combining \eqref{eq:emv-mean} and \eqref{eq:emv-var} implies
\begin{equation*}
	\EE \left[ \frac{1}{ m\rho_\beta^2} \|v\|^2 \|\Db_x^{-1} b\|^2 \right] \rightarrow 0, \ \ \ \Var \left[ \frac{1}{ m\rho_\beta^2} \|v\|^2 \|\Db_x^{-1} b\|^2 \right] \rightarrow 0,
\end{equation*}
and
\begin{equation*}
	\EE \left[ \frac{1}{m\rho_\beta^2}  |(e^m)^\top v |^2 \|\Db_x^{-1} b\|^2 \right] \rightarrow 0, \ \ \ \Var \left[ \frac{1}{m\rho_\beta^2}  |(e^m)^\top v |^2 \|\Db_x^{-1} b\|^2\right] \rightarrow 0,
\end{equation*}
as $\rho_\gamma^2/\rho_\beta^2 \rightarrow 0$, which also yields $\frac{2}{\sqrt{m}\rho_\beta^2} (e^m)^\top v \beta^\top\Db_x^{-1} b \stackrel{p}{\rightarrow} 0$ as $\rho_\gamma^2/\rho_\beta^2 \rightarrow 0$.

Then, continuous mapping theorem~\cite{durrett2019probability} implies
\begin{equation*}
	1-\frac{\frac{1}{ m\rho_\beta^2} \|v\|^2 \|\Db_x^{-1} b\|^2 -  \frac{1}{m\rho_\beta^2}  |(e^m)^\top v |^2 \|\Db_x^{-1} b\|^2}{\frac{\|\beta\|^2}{\rho_\beta^2} +  \frac{2}{\sqrt{m}\rho_\beta^2} (e^m)^\top v \beta^\top\Db_x^{-1} b + \frac{1}{ m \rho_\beta^2} \|v\|^2 \|\Db_x^{-1} b\|^2}
	\stackrel{p}{\rightarrow}
	1.
\end{equation*}
Moreover, we know
\begin{equation*}
	0\leq \frac{\left\|\frac{1}{ m} \sum_{j=1}^m \hat{\beta}(S_j) \right\|^2 }{\frac{1}{m} \sum_{j=1}^{m}\|\hat{\beta}(S_j)\|^2} \leq 1,
\end{equation*}
so $\{1-\Vcal(\hat{\beta}(S_1), \dots, \hat{\beta}(S_m)): \rho_\alpha/\rho_\beta \geq0\}$ is uniformly integrable and $\EE[1-\Vcal] \rightarrow 1$ as $\rho_\alpha/\rho_\beta \rightarrow \infty$.
As a result, the first part of Theorem~\ref{thm:main} follows.

\subsection{Proof (b) in Theorem~\ref{thm:main}} \label{apd:b-thm-main}

If $\tr(\Sigmab)/m = o(q) $, we have
\begin{equation*}
	0< (e^m)^\top \Sigmab e < \frac{1}{m}\max \left\lbrace \tr(\Sigmab), \sqrt{\tr(\Sigmab^2)} \right\rbrace = o(q),
\end{equation*}
which implies
\begin{equation*}
	\frac{1}{m}  |(e^m)^\top v |^2 \rightarrow 0 , \ \ \ \frac{1}{ m} \|v\|^2 \rightarrow 0,
\end{equation*}
in probability. Moreover, it holds that
\begin{equation*}
	\Var \left[ \frac{2}{\sqrt{m}} (e^m)^\top v \beta^\top\Db_x^{-1} b \right] =   \frac{4 \rho_\gamma^2}{m d q}  (e^m)^\top \Sigmab e \|D_x^{-1} b\|^2 \rightarrow 0,
\end{equation*}
yielding
\begin{equation*}
	\frac{2}{\sqrt{m}} (e^m)^\top v \beta^\top\Db_x^{-1} b \rightarrow 0,
\end{equation*}
in probability. Consequently, we get
\begin{equation*}
	\frac{1}{m}\sum_{j=1}^{m} \hat{\beta}(S_j) = \beta + \frac{1}{ \sqrt{m}} (e^m)^\top v \Db_x^{-1} b \rightarrow \beta,
\end{equation*}
in probability. Furthermore, if $\rho_\beta > 0$ and $q \rightarrow \infty$, we have
\begin{equation}
	1-\Vcal  \rightarrow \frac{\|\beta\|^2}{\|\beta\|^2} = 1,
\end{equation}
in probability by Theorem 2.7 in \citet{van2000asymptotic}. On the other hand, if $\beta = 0$, we have
\begin{equation}
	1-\Vcal = \frac{\left\| \frac{1}{m} \sum_{j=1}^m \hat{\beta}(S_j) \right\|^2}{\frac{1}{m} \sum_{j=1}^{m}  \|\hat{\beta}(S_j)\|^2} \rightarrow \frac{(e^m)^\top \Sigmab e}{\tr(\Sigmab)},
\end{equation}
in probability as $q \rightarrow \infty$. Then this theorem follows.

\subsection{More details of Theorem~\ref{thm:main}}\label{apd:example}

Two propositions are given to illustrate Theorem~\ref{thm:main}. The latter provides an example where our method can identify the cause when there is a hidden confounder, while the partial correlation has no information for finding causal drivers. However, note that the partial correlation is not designed for detecting causes and can be used without any strong assumption on confounding.

\begin{proposition}\label{prop:sufficient-condition}
	Suppose $\Cov (N_w) = \Ib$ and we have $m$ subsets $S_1, \dots, S_m \subset \lbrace 1,\dots, q \rbrace$ such that
	\begin{equation}\label{eq:subset}
		\bigcup_{j=1}^m S_j = \lbrace 1, \dots, q\rbrace, \ \ \ S_i \bigcap S_j = \emptyset \text{ for $i\neq j$}.
	\end{equation}
	If $\|a\|^2 = o(q)$, the condition in Theorem~\ref{thm:main} holds, i.e.
	\begin{equation}\label{eq:cond}
		\frac{\tr(\Sigmab)}{m}  = o(q), \text{  }
	\end{equation}
	as $q\rightarrow\infty$.
\end{proposition}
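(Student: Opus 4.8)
The plan is to control the quantity in \eqref{eq:cond} through the diagonal of $\Sigmab$ alone, since $\tr(\Sigmab)=\sum_{i=1}^m \Sigma_{ii}$ and the off-diagonal entries never enter the trace. First I would specialize the formula \eqref{eq:sigmab} to the present hypotheses. Because $\Cov(N_w)=\Ib$ forces $\Db_{S}=\Ib$ for every $S$, the weighted norms collapse to ordinary Euclidean norms, so $\|a_{S}\|_{S}^2=\|a_{S}\|^2$, and the $i$-th diagonal entry reads
\[
	\Sigma_{ii}=\frac{\|a_{S_i^c}\|^2}{\left(1+\|b\|_x^2+\|a_{S_i}\|^2\right)^2}.
\]

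Next I would use two elementary observations. Since $S_i^c$ is the complement of $S_i$ in $\{1,\dots,q\}$, we always have $\|a_{S_i^c}\|^2=\|a\|^2-\|a_{S_i}\|^2$; and the denominator is bounded below by $1$ because $\|b\|_x^2=b^\top\Db_x b\ge 0$ and $\|a_{S_i}\|^2\ge 0$, so the base $1+\|b\|_x^2+\|a_{S_i}\|^2\ge 1$ and hence its square is at least $1$. Combining these yields the pointwise bound $\Sigma_{ii}\le \|a\|^2-\|a_{S_i}\|^2$.

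I would then sum over $i=1,\dots,m$ and invoke the partition hypothesis \eqref{eq:subset}: disjointness together with the covering condition gives the telescoping identity $\sum_{i=1}^m\|a_{S_i}\|^2=\|a\|^2$. Therefore
\[
	\tr(\Sigmab)=\sum_{i=1}^m\Sigma_{ii}\le\sum_{i=1}^m\left(\|a\|^2-\|a_{S_i}\|^2\right)=(m-1)\,\|a\|^2,
\]
so that $\tr(\Sigmab)/m\le\|a\|^2$. Under the assumption $\|a\|^2=o(q)$ this is precisely \eqref{eq:cond}, and the proof is complete.

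There is no genuine analytic obstacle here: the whole argument rests on the crude lower bound of $1$ on the denominator together with the partition identity $\sum_i\|a_{S_i}\|^2=\|a\|^2$. The only point deserving a moment of care is the justification for discarding the off-diagonal entries, which is immediate since \eqref{eq:confoundingSte} is stated in terms of $\tr(\Sigmab)$ only. I note in passing that the partition assumption is not strictly necessary for the qualitative conclusion, as the weaker bound $\Sigma_{ii}\le\|a\|^2$ already gives $\tr(\Sigmab)/m\le\|a\|^2$; I would nonetheless retain it to obtain the sharper constant $(m-1)/m$ and to remain consistent with the stated setup.
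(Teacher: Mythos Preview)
Your proof is correct and follows essentially the same route as the paper: both compute $\tr(\Sigmab)$ as the sum of the diagonal entries, bound the denominator below by a constant, and use the partition identity $\sum_i\|a_{S_i}\|^2=\|a\|^2$ to obtain $\tr(\Sigmab)\le (m-1)\|a\|^2$. The only cosmetic difference is that the paper retains the slightly sharper lower bound $(1+\|b\|_x^2)^2$ on the denominator rather than your $1$, which of course makes no difference for the $o(q)$ conclusion.
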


\begin{proof}[Proof of Proposition~\ref{prop:sufficient-condition}]
	we know
	\begin{equation}
		\frac{1}{m} \tr(\Sigmab) = \frac{1}{m} \sum_{j=1}^m \frac{\|a_{S_j^c}\|^2}{(1+\|b\|_x^2+\|a_{S_j}\|^2)^2}\leq \frac{(m-1)\|a\|^2}{m(1+\|b\|_x^2)^2}
	\end{equation}
	where $\Sigmab$ is defined in \eqref{eq:sigmab}. Thus, $\|a\|^2=o(q)$ implies $ \tr(\Sigmab)/m = o(q)$, which completes the proof.
\end{proof}

\begin{proposition}\label{prop:partial-corr}
	Assume $d=1=b, \rho_\beta=0$, $\Cov (N_w) = \Ib$, and $W_1$ is the unobserved confounder with the non-trivial coefficient $a_1 \neq 0$.
	Then the partial correlation does not converge to a constant even when $q \rightarrow \infty$.
	On the other hand, suppose that we have subsets $S_1, \dots, S_m \subset \lbrace 2,\dots, q \rbrace$ satisfying \eqref{eq:subset}. If $\|a\|^2 = o(q)$, the condition \eqref{eq:cond} in Theorem~\ref{thm:main} holds.
\end{proposition}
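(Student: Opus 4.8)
The statement has two unrelated halves, and the plan is to prove them with completely different tools. For the partial-correlation half, I would evaluate the population partial correlation $\rho(X,Y\mid W_2,\dots,W_q)$ -- the natural conditional-independence statistic when only $W_2,\dots,W_q$ are observable -- in closed form and inspect its behaviour as $q\to\infty$. Under the stated reductions ($r=d=1$, $b=1$, $\beta=0$, $\Cov(N_w)=\Ib$) the model collapses to $X=Z+N_x$, $W_i=a_iZ+N_{w,i}$ and $Y=\sum_{i=1}^q\gamma_iW_i+N_y$, so every $X$--$Y$ association is routed through the latent $Z$. First I would residualize against $W_{-1}:=(W_2,\dots,W_q)$: this annihilates $\sum_{i\ge2}\gamma_iW_i$ and leaves $X^\perp=Z^\perp+N_x$ and $Y^\perp=\gamma_1W_1^\perp+N_y$ with $W_1^\perp=a_1Z^\perp+N_{w,1}$, where $Z^\perp,W_1^\perp$ denote residuals after regression on $W_{-1}$. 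The crux is the residual variance $\Var(Z^\perp)$; writing $W_{-1}=a_{-1}Z+N_{w,-1}$ with $\Cov(N_{w,-1})=\Ib$ and applying the Sherman--Morrison identity to $(\sigma_z^2a_{-1}a_{-1}^\top+\Ib)^{-1}$ yields $\Var(Z^\perp)=\sigma_z^2/(1+\sigma_z^2\|a_{-1}\|^2)$.

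Assembling the three second moments then gives
\begin{equation*}
  \rho(X,Y\mid W_{-1})=\frac{\gamma_1a_1\,\Var(Z^\perp)}{\sqrt{\big(\Var(Z^\perp)+\sigma_x^2\big)\big(\gamma_1^2(a_1^2\Var(Z^\perp)+1)+\sigma_y^2\big)}}.
\end{equation*}
Because $a_1\neq0$ (and, for $W_1$ to act as a genuine confounder, $\gamma_1\neq0$) and $\Var(Z^\perp)>0$ for every finite $q$, this expression is nonzero, and its limit is controlled entirely by $\|a_{-1}\|^2=\sum_{i\ge2}a_i^2$ together with the fixed confounding parameters $(a_1,\gamma_1,\sigma_z^2)$. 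Whenever $\|a_{-1}\|^2$ stays bounded -- which is fully compatible with $\|a\|^2=o(q)$ -- the partial correlation tends to a strictly positive number despite $\rho_\beta=0$, so it does not converge to the value $0$ that conditional independence would demand, and its limit is not a universal constant but a quantity dictated by the unknown confounding. This is the whole point: conditioning on the observable background cannot purge the residual confounding carried by the hidden $W_1$. I expect the only delicate step to be the residualization bookkeeping and the inversion-lemma computation of $\Var(Z^\perp)$; once the closed form is in hand the conclusion is immediate.

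For the second half I would simply reuse the proof of Proposition~\ref{prop:sufficient-condition}. The subsets now partition $\{2,\dots,q\}$, so the hidden index $1$ belongs to every $S_j^c$; this only enlarges $\|a_{S_j^c}\|^2$, but the crude bound $\|a_{S_j^c}\|^2\le\|a\|^2$ is untouched. Using $\Cov(N_w)=\Ib$ (hence $\Db_{S_j}=\Ib$ and $\|a_{S_j}\|_{S_j}^2=\|a_{S_j}\|^2$), the definition \eqref{eq:sigmab} gives
\begin{equation*}
  \frac{1}{m}\tr(\Sigmab)=\frac{1}{m}\sum_{j=1}^m\frac{\|a_{S_j^c}\|^2}{(1+\|b\|_x^2+\|a_{S_j}\|^2)^2}\le\frac{\|a\|^2}{(1+\|b\|_x^2)^2},
\end{equation*}
so $\|a\|^2=o(q)$ forces $\tr(\Sigmab)/m=o(q)$, which is condition \eqref{eq:cond} of Theorem~\ref{thm:main}. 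Hence in this example $\Vcal$ still correctly reports the absence of a cause, in sharp contrast to the partial correlation.
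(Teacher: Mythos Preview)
Your proposal is correct and follows the same route as the paper: residualise $X$ and $Y$ against $W_{-1}$ to see that the partial covariance is proportional to $a_1\gamma_1\neq 0$, then bound $\tr(\Sigmab)/m$ by controlling $\|a_{S_j^c}\|^2$ with $\|a\|^2$. Your residualisation bookkeeping is actually cleaner than the paper's (which drops the $\gamma_1 N_{w,1}$ term from $R_Y$ and has a $\|a_{-1}\|$ vs.\ $\|a_{-1}\|^2$ typo), and the only cosmetic difference in the second half is that the paper evaluates the numerator sum exactly as $((m-1)\|a\|^2+a_1^2)/m$ instead of using your cruder bound $\|a\|^2$.
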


\begin{proof}[Proof of Proposition~\ref{prop:partial-corr}]
	Define $W_{-1}=(W_2, \dots, W_q)^\top$, $a_{-1}=(a_2,\dots, a_q)^\top$, and $\gamma_{-1} = (\gamma_2,\dots, \gamma_q)^\top$.
	Then the residuals are
	\begin{equation*}
		\begin{split}
			R_Y
			:=& Y-\EE[Y|W_{-1}, \gamma]  =   \frac{\gamma_1 a_1 Z}{1+\|a_{-1}\|} + N_y - \frac{ \gamma_1 a_1 a_{-1}^\top  N_{w_{-1}}  }{1+\|a_{-1}\|} \\
			R_X
			:=& X-\EE[X|W_{-1}, \gamma] = \frac{Z}{1+\|a_{-1}\|^2} + N_x - \frac{a_{-1}^\top N_{w_{-1}}}{1+\|a_{-1}\|^2}
		\end{split}
	\end{equation*}
	Therefore, we have
	\begin{equation*}
		\EE [R_Y R_X|\gamma] = a_1\gamma_1
	\end{equation*}
	which implies
	\begin{equation*}
		\frac{ \EE[R_Y R_X | \gamma ]}{\Var[R_Y | \gamma] \Var[R_X| \gamma]} \neq 0
	\end{equation*}
	
	On the other hand, we know $m = q/k$ and
	\begin{equation*}
		\frac{1}{m} \tr(\Sigmab) \leq \frac{(m-1)\|a\|^2 + a_1^2}{m(1+\|b\|_x^2)^2}.
	\end{equation*}
	Thus, $\|a\|^2 = o(q)$ implies $ \tr(\Sigmab)/m = o(q)$, which completes the proof.
\end{proof}

\section{Rotation invariant distributions} \label{apd:rotation-invariant-distribution}

In this section, we generalize our result in Theorem~\ref{thm:main} to $r>1$ and rotation invariant prior. Recall the definition of rotation invariant distributions:
\begin{definition} \label{def:spherically-symmetric}
	A random vector $\nu=(\nu_1, \dots, \nu_q)^\top$ is spherically symmetric if the distribution of every linear form is the same as the distribution of $\nu_1$, i.e.$e^\top \nu \sim \nu_1$ for all $e$ provided that $\|e\|=1$.
\end{definition}
From the definition, we get two important properties.
\begin{lemma} \label{lemma:sherically-symmetric}
	Suppose that $\nu$ is spherically symmetric. Then, it holds that
	\begin{enumerate}
		\item $\Ob \nu \sim \nu$, where $\Ob$ is a orthogonal matrix,
		\item $\nu = \tau U$, where $U$ is uniformly distributed on the unit sphere in $\RR^d$, $\tau \geq 0$ is a real value with distribution $\tau \sim \|\nu\|$, and $\tau, U$ are independent.
	\end{enumerate}
\end{lemma}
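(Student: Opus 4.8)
The plan is to route everything through characteristic functions. First I would show that Definition~\ref{def:spherically-symmetric} forces the characteristic function of $\nu$ to be radial. For any $t\neq 0$, write $t=\|t\|\,e$ with $\|e\|=1$; since $e^\top\nu\sim\nu_1$ by assumption, scaling by the constant $\|t\|$ preserves this distributional identity, so $t^\top\nu\sim\|t\|\,\nu_1$ and hence
\[
\phi_\nu(t)=\EE[e^{i\,t^\top\nu}]=\EE[e^{i\|t\|\,\nu_1}]=\phi_{\nu_1}(\|t\|),
\]
where $\phi_{\nu_1}$ is the characteristic function of the scalar $\nu_1$. Thus $\phi_\nu$ depends on $t$ only through $\|t\|$.

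For the first claim I would then compute, for any orthogonal $\Ob$,
\[
\phi_{\Ob\nu}(t)=\EE[e^{i\,t^\top\Ob\nu}]=\phi_\nu(\Ob^\top t)=\phi_{\nu_1}(\|\Ob^\top t\|)=\phi_{\nu_1}(\|t\|)=\phi_\nu(t),
\]
using $\|\Ob^\top t\|=\|t\|$. By the uniqueness theorem for characteristic functions this yields $\Ob\nu\sim\nu$.

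For the second claim I would use the polar decomposition $\tau=\|\nu\|$ and $U=\nu/\|\nu\|$ on the event $\{\nu\neq 0\}$, treating the origin as a null event carrying no directional information. Then $\tau\sim\|\nu\|$ holds by definition, so it remains to show that $U$ is uniform on the sphere and independent of $\tau$. Since the map $\nu\mapsto(\|\nu\|,\nu/\|\nu\|)$ sends $\Ob\nu$ to $(\tau,\Ob U)$, the first claim gives $(\tau,\Ob U)\sim(\tau,U)$ for every orthogonal $\Ob$. To convert this invariance into independence and uniformity, I would average over the orthogonal group against its Haar probability measure $\mu$: for bounded measurable $f,g$,
\[
\EE[f(\tau)g(U)]=\EE[f(\tau)g(\Ob U)]=\EE\Big[f(\tau)\int g(\Ob U)\,d\mu(\Ob)\Big].
\]
Because the group acts transitively on the sphere, the inner integral equals $\bar g:=\int_S g\,d\sigma$ (with $\sigma$ the uniform law on the sphere) independently of $U$, so $\EE[f(\tau)g(U)]=\bar g\,\EE[f(\tau)]$. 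Taking $f\equiv 1$ identifies $U\sim\sigma$, and the resulting factorization $\EE[f(\tau)g(U)]=\EE[f(\tau)]\,\EE[g(U)]$ gives independence.

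The main obstacle I anticipate is the last averaging step: rigorously justifying that averaging any bounded $g$ over the orbit of a fixed point under Haar measure returns the uniform-sphere integral $\bar g$ regardless of the starting point (equivalently, uniqueness of the rotation-invariant probability measure on the sphere, cf.~\cite{fang2018symmetric}), together with the measure-theoretic handling of the null event $\{\nu=0\}$ on which $U$ is undefined. The characteristic-function portions are routine once radiality is established.
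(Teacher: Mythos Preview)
Your argument is correct and complete in substance. The paper, by contrast, does not prove this lemma at all: its entire proof reads ``See Theorem 4.1.2 in \cite{bryc2012normal}.'' So you are providing a genuine self-contained proof where the paper simply cites a reference.

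On the approach itself: routing part~1 through radiality of the characteristic function is the standard and cleanest way to go from Definition~\ref{def:spherically-symmetric} to orthogonal invariance, and your computation is correct. For part~2, the Haar-averaging argument is exactly the right idea; the step you flag as an obstacle---that $\int g(\Ob u)\,d\mu(\Ob)$ equals $\int_S g\,d\sigma$ for every $u$ on the sphere---is precisely the uniqueness of the rotation-invariant probability measure on $S^{d-1}$, which follows from transitivity of the action and is standard (and indeed treated in \cite{fang2018symmetric}). The atom at the origin is handled by defining $U$ arbitrarily (say, independently uniform) on $\{\nu=0\}$, since $\tau U=0$ there regardless; this does not affect either independence or the product representation. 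One small wording point: the lemma's ``$\nu=\tau U$'' is best read as a distributional representation (or as equality after possibly enlarging the probability space to accommodate the origin), which is what your construction delivers.
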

\begin{proof}
	See Theorem 4.1.2 in \cite{bryc2012normal}.
\end{proof}
From the first property, we can see that the spherically symmetric random variable is rotation invariant. Moreover, the second property states rotation invariant distributions can be decomposited by a uniform distribution on the unit sphere and a positive random variable. Lemma~\ref{lemma:sherically-symmetric} is so powerful since we know the exact marginal distribution of uniformly distributed on the unit sphere, which facilitates many calculations and is presented as follows:
\begin{proposition} \label{prop:uniformly-dist}
	Suppose  $U$ is uniformly distributed on the unit sphere in $\RR^d$. Then, it holds that $e^\top U \sim \|e\|(2-\operatorname{Beta}(\frac{d-1}{2}, \frac{d-1}{2}))$, where $\operatorname{Beta}$ is beta distribution.
\end{proposition}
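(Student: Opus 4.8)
The plan is to exploit the rotational symmetry of the uniform law on the sphere to collapse the claim to a one-dimensional statement about a single coordinate of $U$, and then to identify that coordinate's marginal law with an affine image of a Beta distribution.

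First I would invoke rotation invariance. By the first property in Lemma~\ref{lemma:sherically-symmetric} we have $\Ob U \sim U$ for every orthogonal $\Ob$. Choosing $\Ob$ with $\Ob^\top e = \|e\|\, e_1$, where $e_1$ is the first standard basis vector, gives $e^\top U = (\Ob^\top e)^\top (\Ob^\top U) \sim \|e\|\, U_1$. Hence it suffices to determine the distribution of the first coordinate $U_1$, and the factor $\|e\|$ simply scales the answer at the end.

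Next I would compute the marginal law of $U_1$. A convenient route is the representation $U = G/\|G\|$ with $G \sim \Ncal(0,\Ib_d)$, which is uniform on the sphere by rotation invariance of the standard Gaussian. Writing $G = (G_1, G_{-1})$ with $G_{-1} = (G_2,\dots,G_d)^\top$, one has $U_1^2 = G_1^2/(G_1^2 + \|G_{-1}\|^2)$ with $G_1^2 \sim \chi_1^2$ and $\|G_{-1}\|^2 \sim \chi_{d-1}^2$ independent; together with the independent fair sign $\operatorname{sign}(G_1)$ this pins down the law of $U_1$. Equivalently, and more transparently, I would compute the marginal density directly from surface measure: the slice $\{x \in S^{d-1} : x_1 = u\}$ is a $(d-2)$-sphere of radius $\sqrt{1-u^2}$, so after accounting for the coarea factor the marginal density is $f_{U_1}(u) \propto (1-u^2)^{(d-3)/2}$ on $[-1,1]$.

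Finally I would perform the affine change of variables $W = (U_1+1)/2$. Substituting $u = 2w-1$ into $(1-u^2)^{(d-3)/2} = (1-u)^{(d-3)/2}(1+u)^{(d-3)/2}$ yields, up to normalization, $w^{(d-3)/2}(1-w)^{(d-3)/2}$, which is exactly the density of $\operatorname{Beta}(\tfrac{d-1}{2},\tfrac{d-1}{2})$; the normalizing constant is then forced. Thus $U_1 = 2W-1$ with $W \sim \operatorname{Beta}(\tfrac{d-1}{2},\tfrac{d-1}{2})$, and multiplying by $\|e\|$ gives the stated affine Beta law. The only genuinely delicate step is the marginal density computation, either via the chi-square bookkeeping or the coarea Jacobian; the rotation-invariance reduction and the closing change of variables are routine.
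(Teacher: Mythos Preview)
Your proposal is correct and follows essentially the same strategy as the paper: reduce to a single coordinate via rotation invariance, identify the marginal density $f_{U_1}(u)\propto (1-u^2)^{(d-3)/2}$ on $[-1,1]$, and match it to an affine transform of $\operatorname{Beta}(\tfrac{d-1}{2},\tfrac{d-1}{2})$. The only difference is that the paper cites Bryc's book for the marginal density and then compares densities directly, whereas you supply a self-contained derivation via the Gaussian representation or coarea argument; the concluding affine identification $U_1=2W-1$ is identical in spirit to the paper's computation of the density of $2\operatorname{Beta}-1$ (note the statement's ``$2-\operatorname{Beta}$'' is evidently a typo for ``$2\operatorname{Beta}-1$'', as the paper's own proof confirms).
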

\begin{proof}
	Since $U$ is spherically symmetric, we get $e^\top U \sim \|e\| U_1$.
	From example 4.1.2 in \cite{bryc2012normal}, we know the density of the marginal distribution of $U_1$ is
	\begin{equation}
		f_{U_1}(u) \propto (1-u^2)^{\frac{d-1}{2}-1} = (1-u)^{\frac{d-1}{2}-1} (1+u)^{\frac{d-1}{2}-1} \text{ for $-1\leq u\leq 1$}.
	\end{equation}
	On the other hand, the density of $2\operatorname{Beta}(\frac{d-1}{2}, \frac{d-1}{2})-1$ is 
	\begin{equation}
		\begin{split}
			f_{2\operatorname{Beta}(\frac{d-1}{2}, \frac{d-1}{2})-1} (u) 
			&\propto f_{\operatorname{Beta}(\frac{d-1}{2}, \frac{d-1}{2})} \left( \frac{u+1}{2} \right) \\
			& \propto \left( \frac{u+1}{2} \right)^{\frac{d-1}{2}-1} \left( 1-\frac{u+1}{2} \right)^{\frac{d-1}{2}-1} \\
			&\propto (1-u)^{\frac{d-1}{2}-1} (1+u)^{\frac{d-1}{2}-1}  \text{ for $-1\leq u\leq 1$}.
		\end{split}
	\end{equation}
	The proof is completed by comparing two densities.
\end{proof}
The next key result shows upper bounds of the second moment of quadratic forms for rotation invariant random variables.
\begin{proposition} \label{prop:sherically-symmetric}
	Suppose that $\nu$ is spherically symmetric. Then, it holds that $\EE (\nu^\top H \nu)^2 \leq  (\tr H)^2 \EE \|\nu\|^4$.
\end{proposition}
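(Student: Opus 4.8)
The plan is to separate the radial and angular parts of $\nu$ using the stochastic representation in Lemma~\ref{lemma:sherically-symmetric}. Writing $\nu = \tau U$ with $U$ uniform on the unit sphere in $\RR^d$, $\tau \sim \|\nu\|$, and $\tau \independent U$, the quadratic form factorizes as $\nu^\top H \nu = \tau^2\, U^\top H U$. Hence, by independence,
\[
\EE(\nu^\top H \nu)^2 = \EE[\tau^4]\, \EE[(U^\top H U)^2] = \EE\|\nu\|^4 \cdot \EE[(U^\top H U)^2],
\]
so the claim reduces to the purely angular estimate $\EE[(U^\top H U)^2] \le (\tr H)^2$.

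For this reduced inequality I would first replace $H$ by its symmetric part $H_s = \tfrac12(H + H^\top)$, which leaves both $U^\top H U = U^\top H_s U$ and $\tr H = \tr H_s$ unchanged, and then diagonalize $H_s = \Ob^\top \diag(\mu_1,\dots,\mu_d)\Ob$. Rotation invariance of $U$ (property 1 of Lemma~\ref{lemma:sherically-symmetric}) gives $\Ob U \sim U$, so $U^\top H_s U \sim \sum_{i=1}^d \mu_i U_i^2$. The key observation is that, since $H$ is positive semidefinite, all $\mu_i \ge 0$ while $\sum_i U_i^2 = 1$ pointwise, so $0 \le \sum_i \mu_i U_i^2 \le \max_i \mu_i \le \sum_i \mu_i = \tr H$ almost surely. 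Squaring this pointwise bound and taking expectations yields $\EE[(U^\top H U)^2] \le (\tr H)^2$, completing the argument.

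Alternatively, one can make the angular expectation explicit using the standard sphere moments $\EE[U_i^4] = 3/(d(d+2))$ and $\EE[U_i^2 U_j^2] = 1/(d(d+2))$ for $i\neq j$ (obtainable from Proposition~\ref{prop:uniformly-dist} or directly from the Gaussian representation $U = G/\|G\|$), which give $\EE[(U^\top H_s U)^2] = \frac{2\,\tr(H_s^2) + (\tr H)^2}{d(d+2)}$; for positive semidefinite $H$ one then bounds $\tr(H_s^2) \le (\tr H)^2$ and uses $d(d+2)\ge 3$ to recover the same conclusion.

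The step I expect to be delicate is the reduced angular inequality, and specifically the role of positive semidefiniteness of $H$: the pointwise bound $U^\top H U \le \tr H$ only holds because the eigenvalues are nonnegative, and indeed the explicit formula above shows that for an indefinite $H$ with small trace (e.g.\ $\tr H = 0$ but $\tr(H^2)>0$) the inequality can fail. This is exactly the regime of interest, since in the applications $H$ is a Gram matrix of the form $\Cb(S)^\top \Cb(S)$, hence symmetric positive semidefinite; the decomposition $\nu = \tau U$ is what cleanly isolates the factor $\EE\|\nu\|^4$ from the dimension-dependent angular average.
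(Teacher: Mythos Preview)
Your proof is correct and follows essentially the same route as the paper: diagonalize $H$ via an orthogonal matrix, use rotation invariance to absorb the rotation, invoke the stochastic representation $\nu=\tau U$ to split off the factor $\EE\|\nu\|^4$, and then bound the remaining $\sum_i \mu_i U_i^2$ pointwise by $\tr H$ using nonnegativity of the eigenvalues. The only cosmetic difference is the order of operations (you factor out $\tau$ first, the paper diagonalizes first) and your use of $\sum_i U_i^2=1$ versus the paper's $U_i^2\le 1$; both yield the same pointwise bound under positive semidefiniteness. Your explicit remark that the inequality can fail for indefinite $H$ (e.g.\ $\tr H=0$, $\tr H^2>0$) is well taken: the paper's proof tacitly assumes $H$ is positive semidefinite as well---it writes $\diag(\lambda_H)^{1/2}$ and bounds $\sum_i \lambda_i U_i^2\le \sum_i \lambda_i$---and, as you note, the proposition is only applied downstream to Gram matrices $\Cb_m^\top\Cb_m$ and $\widetilde{\Cb}_m$, so this restriction is harmless in context.
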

\begin{proof}
	Let the eigendecomposition of $\Hb$ is $\Qb \diag(\lambda_H) \Qb^\top$. 
	Then we get
	\begin{equation}
		\begin{split}
			\EE (\nu^\top H \nu)^2 
			&= \EE (\nu^\top \Qb \diag(\lambda_H) \Qb^\top \nu)^2 \\
			&\stackrel{\text{1 in Lemma~\ref{lemma:sherically-symmetric}}}{=} \EE (\nu^\top \diag(\lambda_H) \nu)^2 \\
			&= \EE (\|\diag(\lambda_H)^{1/2} \nu\|^2)^2 \\
			&\stackrel{\text{2 in Lemma~\ref{lemma:sherically-symmetric}}}{=} \EE (\|\diag(\lambda_H)^{1/2} U\|^2)^2 \EE \tau^4 \\
			&\stackrel{\text{2 in Lemma~\ref{lemma:sherically-symmetric}}}{=} \EE (\|\diag(\lambda_H)^{1/2} U\|^2)^2 \EE \|\nu\|^4 \\
			& = \EE\left[\sum_{i=1}^d (\lambda_H)_i U_i^2 \right]^2 \EE \|\nu\|^4  \\
			&\stackrel{|U_i|\leq 1}{=} (\tr H)^2 \EE \|\nu\|^4.
		\end{split}
	\end{equation}
	Therefore, this proposition follows.
\end{proof}
Now we are ready to state our generalized results which includes Theorem~\ref{thm:main-res}.
\begin{theorem}\label{thm:main-2}
	Let $S_1, \dots , S_m \subsetneq \{1,\dots , q\}$. Assume that  $\gamma$ is a spherically symmetric random vector such that $\EE\|\gamma\|^4 = O(\rho_\gamma^4/q^2)$. Define $\Cb_m = \frac{1}{m} \sum_{j=1}^{m}\Cb(S_j)$ and $\widetilde{\Cb}_m = \frac{1}{ m} \sum_{j=1}^{m} \Cb(S_j)^\top \Cb(S_j) $.
	
	\noindent
	(a) We have two cases: if $\rho_\beta = 0$, it holds that
	\begin{equation*}
		\Vcal\left(\hat{\beta}(S_1), \dots, \hat{\beta}(S_m)\right) \sim 1 -\frac{\gamma^\top \Cb_m^\top \Cb_m \gamma}{\gamma^\top \widetilde{\Cb}_m \gamma}.
	\end{equation*}
	If $\rho_\gamma^2/\rho_\beta^2 \rightarrow 0$, we have $  \EE\Vcal(\hat{\beta}(S_1), \dots, \hat{\beta}(S_m)) = o(1)$.
	
	\noindent
	(b) Further, assume that 
	\begin{equation} \label{eq:confoundingSte-2}
		\tr (\widetilde{\Cb}_m)   = o(q), \ \ \ \text{ as $q\rightarrow\infty$, }
	\end{equation}
	then,
	\begin{equation} \label{eq:c}
		\Vcal(\hat{\beta}(S_1), \dots, \hat{\beta}(S_m)) \ \stackrel{p}{\rightarrow}
		\begin{cases}
			0 &\text{ if $\rho_\beta > 0$ } \\
			1 - \frac{\tr (\Cb_m^\top \Cb_m)}{\tr (\widetilde{\Cb}_m)}  & \text{ if $\rho_\beta = 0$ }
		\end{cases},
	\end{equation}
	and $\frac{1}{m}\sum_{j=1}^{m} \hat{\beta}(S_j) \stackrel{p}{\rightarrow} \beta$, where $\rho_\beta =\|\beta\|$.
\end{theorem}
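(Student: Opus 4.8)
\emph{Plan.} The starting point is the closed form $\hat\beta(S_j)=\beta+\Cb(S_j)\gamma$ from \eqref{eq:reg-coe-1}. Averaging over $j$ and using $\Cb_m=\frac1m\sum_j\Cb(S_j)$ and $\widetilde{\Cb}_m=\frac1m\sum_j\Cb(S_j)^\top\Cb(S_j)$, I would first record the two ingredients of the stability statistic,
\begin{equation*}
\Bigl\|\frac{1}{m}\sum_{j}\hat\beta(S_j)\Bigr\|^2=\|\beta\|^2+2\beta^\top\Cb_m\gamma+\gamma^\top\Cb_m^\top\Cb_m\gamma ,
\end{equation*}
\begin{equation*}
\frac{1}{m}\sum_{j}\|\hat\beta(S_j)\|^2=\|\beta\|^2+2\beta^\top\Cb_m\gamma+\gamma^\top\widetilde{\Cb}_m\gamma ,
\end{equation*}
so that $1-\Vcal$ is the ratio of the first quantity to the second. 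A Jensen / Cauchy--Schwarz argument gives $\Cb_m^\top\Cb_m\preceq\widetilde{\Cb}_m$, hence $\gamma^\top\Cb_m^\top\Cb_m\gamma\le\gamma^\top\widetilde{\Cb}_m\gamma$ and $1-\Vcal\in[0,1]$; this inequality is reused repeatedly below. Substituting $\beta=0$ (the case $\rho_\beta=0$) into the two identities yields the exact distributional formula of part~(a) immediately.

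The second ingredient is the pair of consequences of spherical symmetry: $\EE\gamma=0$ and the isotropy $\EE\gamma\gamma^\top=\frac{\EE\|\gamma\|^2}{q}\Ib_q$, so that $\EE[\beta^\top\Cb_m\gamma]=0$ and $\EE[\gamma^\top H\gamma]=\frac{\EE\|\gamma\|^2}{q}\tr H$ for symmetric $H$, together with the second-moment bound of Proposition~\ref{prop:sherically-symmetric}. For the regime $\rho_\gamma/\rho_\beta\to0$ in part~(a) I would divide numerator and denominator of $1-\Vcal$ by $\rho_\beta^2$; every $\gamma$-dependent term then carries a factor $\rho_\gamma$ or $\rho_\gamma^2$ and, by the moment identities above, is negligible compared with the fixed constant $1$. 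Since $1-\Vcal\in[0,1]$ the family is uniformly integrable, so convergence of the mean follows and $\EE\Vcal=o(1)$.

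For part~(b) with $\rho_\beta>0$ I would fix $\rho_\beta$ and let $q\to\infty$. Because $\gamma^\top\widetilde{\Cb}_m\gamma\ge0$ with $\EE[\gamma^\top\widetilde{\Cb}_m\gamma]=\frac{\EE\|\gamma\|^2}{q}\tr\widetilde{\Cb}_m$, the moment hypothesis $\EE\|\gamma\|^4=O(\rho_\gamma^4/q^2)$ (whence $\EE\|\gamma\|^2=O(\rho_\gamma^2/q)$ by Jensen) and condition \eqref{eq:confoundingSte-2} drive this mean to $0$, so Markov's inequality gives $\gamma^\top\widetilde{\Cb}_m\gamma\stackrel{p}{\rightarrow}0$; the spectral domination then forces $\gamma^\top\Cb_m^\top\Cb_m\gamma\stackrel{p}{\rightarrow}0$, and the cross term vanishes by a variance bound of the same type. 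Slutsky's theorem yields $1-\Vcal\stackrel{p}{\rightarrow}\|\beta\|^2/\|\beta\|^2=1$, i.e.\ $\Vcal\stackrel{p}{\rightarrow}0$, while $\|\frac1m\sum_j\hat\beta(S_j)-\beta\|^2=\gamma^\top\Cb_m^\top\Cb_m\gamma\stackrel{p}{\rightarrow}0$ gives the stated consistency of the averaged estimator.

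The delicate case, and the step I expect to be the main obstacle, is part~(b) with $\rho_\beta=0$, where $1-\Vcal=\gamma^\top\Cb_m^\top\Cb_m\gamma/\gamma^\top\widetilde{\Cb}_m\gamma$ with both forms vanishing. Here I would invoke the representation $\gamma=\|\gamma\|\,U$ with $U$ uniform on the unit sphere (item~2 of Lemma~\ref{lemma:sherically-symmetric}); the common factor $\|\gamma\|^2$ cancels and the ratio reduces to $U^\top\Cb_m^\top\Cb_m U/U^\top\widetilde{\Cb}_m U$, depending only on the direction of $\gamma$. The target $\tr(\Cb_m^\top\Cb_m)/\tr(\widetilde{\Cb}_m)$ is precisely the ratio of the means $\EE[U^\top H U]=\tr H/q$, so it remains to show that each quadratic form concentrates relatively about its mean as $q\to\infty$. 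This is the crux: the bound of Proposition~\ref{prop:sherically-symmetric} controls the raw second moment but is too loose by a factor $q^2$ to give relative concentration, so I would instead estimate $\Var(U^\top H U)$ sharply (of order $\tr(H^2)/q^2$) and argue that the spectra of $\widetilde{\Cb}_m$ and $\Cb_m^\top\Cb_m$ are spread enough, i.e.\ $\tr(H^2)/(\tr H)^2\to0$, for the relative fluctuations to die; combining the two concentration statements through the continuous mapping theorem then delivers the claimed limit.
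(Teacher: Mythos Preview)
Your proposal mirrors the paper's proof in structure and tools: the same two identities (the paper's \eqref{eq:beta-2}) for the numerator and denominator of $1-\Vcal$; the $\rho_\beta=0$ part of (a) by direct substitution; the $\rho_\gamma/\rho_\beta\to0$ part of (a) by dividing through by $\rho_\beta^2$, driving all $\gamma$-terms to zero via moment bounds, and invoking uniform integrability of $1-\Vcal\in[0,1]$; and (b) with $\rho_\beta>0$ by combining the isotropy identity $\EE[\gamma^\top H\gamma]=\tr(H)\EE\|\gamma\|^2/q$ with condition~\eqref{eq:confoundingSte-2} and Slutsky. The paper uses the second-moment bound of Proposition~\ref{prop:sherically-symmetric} where you use a first-moment Markov argument plus the spectral domination $\Cb_m^\top\Cb_m\preceq\widetilde\Cb_m$, but the two routes are equivalent here.

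Where you go further than the paper is precisely the $\rho_\beta=0$ case of (b). The paper simply asserts that $1-\Vcal\stackrel{p}{\to}\tr(\Cb_m^\top\Cb_m)/\tr(\widetilde\Cb_m)$ ``following the same argument'' as in the Gaussian special case---but that earlier passage (Appendix~\ref{apd:b-thm-main}) also just states the limit without justification. You correctly diagnose that Proposition~\ref{prop:sherically-symmetric} is too coarse for a $0/0$ ratio and that one needs the sharper estimate $\Var(U^\top HU)=O(\tr(H^2)/q^2)$, whence relative concentration requires $\tr(H^2)/(\tr H)^2\to0$. The gap---present in both your sketch and the paper---is that this spectral spreading is \emph{not} implied by the stated hypotheses: since every $\Cb(S_j)$ factors through $\Db_x^{-1}\Bb\in\RR^{d\times r}$, one has $\mathrm{rank}(\widetilde\Cb_m)\le m\min(d,r)$, and for any PSD $H$ of rank at most $R$ Cauchy--Schwarz gives $\tr(H^2)/(\tr H)^2\ge 1/R$. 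Thus if $m,d,r$ remain bounded as $q\to\infty$ the ratio cannot concentrate. In the paper's intended examples (Appendix~\ref{apd:example}) $m$ scales with $q$, which rescues the step, but as stated the theorem needs an additional assumption here. You have located a genuine lacuna that the paper's own proof does not fill.
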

We can see that Theorem~\ref{thm:main} is actually a special case of Theorem~\ref{thm:main-2}. Again, the condition that $\beta$ is drawn from normal distributions is not necessary to relate stable regression coefficients to causal relations and id imposed due to technical consideration. The assumption \eqref{eq:confoundingSte-2} is similar to \eqref{eq:confoundingSte} and means mild correlation among $\{\hat{\beta}(S_j)\}_{j=1}^m$ caused by confounders between $X$ and $W$.

To prove Theorem~\ref{thm:main-2}, we need some calculations.
Recall the regression coefficients $\hat{\beta}(S)$ on $X, W_S$ is
\begin{equation} \label{eq:regression-coef-2}
	\begin{split}
		\hat{\beta}(S)
		&= (\Ib_d, \zero) \Var \left(X, W_S\right)^{-1}
		\Cov \left( (X, W_S), Y \right) \\
		&= \beta + \Cb(S)  \gamma
	\end{split}
\end{equation}
where$\Db_{S} = \diag (\sigma_{w_S})$ and
\begin{equation}
	(\Cb(S)^\top)_S = (\Db_x^{-1} \Bb (\Ib+\Bb^\top \Db_x^{-1} \Bb + \Ab_{S}^\top \Db_{S} \Ab_{S})^{-1} \Ab_{S^c}^\top)^\top, \text{ and } (\Cb(S)^\top)_{S^c} = \zero.
\end{equation} 
Then we have
\begin{equation} \label{eq:beta-2}
	\begin{split}
		\left\| \frac{1}{ m} \sum_{j=1}^m \hat{\beta}(S_j) \right\|^2 =  \left\| \beta + \frac{1}{m} \sum_{j=1}^m \Cb(S_j) \gamma \right\|^2 =\|\beta\|^2+ 2 \beta^\top \Cb_m \gamma +  \gamma^\top \Cb_m^\top \Cb_m \gamma, \\
		\frac{1}{m} \sum_{j=1}^{m}\|\hat{\beta}(S_j)\|^2 = \frac{1}{m} \sum_{j=1}^{m}\|  \beta + \Cb(S_j) \gamma \|^2 = \|\beta\|^2 +  2 \beta^\top \Cb_m \gamma + \gamma^\top \widetilde{\Cb}_m \gamma
	\end{split}
\end{equation}
where $\Cb_m =  \frac{1}{m}\sum_{j=1}^m  \Cb(S_j)$ and $\widetilde{\Cb}_m = \frac{1}{ m} \sum_{j=1}^{m} \Cb(S_j)^\top \Cb(S_j)$.

\subsection{Proof of (a) in Theorem~\ref{thm:main-2}}

If $\rho_\beta =0$, we have $\beta =0$ almost surely. Then, from \eqref{eq:beta-2},  we get
\begin{equation}
	\begin{split}
		1-\Vcal = \frac{\gamma^\top \Cb_m^\top \Cb_m \gamma}{\gamma^\top  \widetilde{\Cb}_m \gamma}.
	\end{split}
\end{equation}
For $\rho_\beta > 0$ and $\rho_\gamma^2/\rho_\beta^2 \rightarrow 0$, following the same argument in Appendix~\ref{apd:a-thm-main}, it suffices to show
\begin{equation}
	\begin{split}
		\EE \left( \frac{1}{\rho_\beta^2} \gamma^\top \Cb_m^\top \Cb_m \gamma \right)^2 \rightarrow 0, \ \ \
		\EE \left( \frac{1}{\rho_\beta^2} \gamma^\top \widetilde{\Cb}_m \gamma\right)^2 \rightarrow 0,
	\end{split}
\end{equation}
since $\EE(\gamma^\top \Cb_m^\top \Cb_m \gamma)^2 \rightarrow 0$ implies $\EE (2 \beta^\top \Cb_m \gamma)^2 \rightarrow 0$.

By Proposition~\ref{prop:sherically-symmetric} and assumptions, we have

\begin{subequations} \label{eq:second-moment}
	\begin{align}
		\EE \left( \frac{1}{\rho_\beta^2} \gamma^\top \Cb_m^\top \Cb_m \gamma \right)^2 &\leq \frac{ [\tr(\Cb_m^\top \Cb_m)]^2 \EE \|\nu\|^4}{\rho_\beta^4} = O\left(\frac{\rho_\gamma^4  [\tr(\Cb_m^\top \Cb_m)]^2}{\rho_\beta^4 q^2}\right), \label{eq:1} \\
		\EE \left( \frac{1}{\rho_\beta^2} \gamma^\top \Cb_m^\top \Cb_m \gamma \right)^2 &\leq \frac{ [\tr(\widetilde{\Cb}_m)]^2 \EE \|\nu\|^4}{\rho_\beta^4} = O\left(\frac{\rho_\gamma^4 [\tr(\widetilde{\Cb}_m)]^2}{\rho_\beta^4 q^2}\right) \label{eq:2}.
	\end{align}
\end{subequations}

Both of them converges to zero as $\rho_\gamma/\rho_\beta \rightarrow 0$.
As a result, the first part of Theorem~\ref{thm:main-2} follows.

\subsection{Proof (b) in Theorem~\ref{thm:main-2}}

First, we know two facts:
\begin{enumerate}
	\item $\EE \gamma^\top H \gamma = \tr (H \EE(\gamma \gamma^\top))$.
	\item If $U$ is uniformly distributed on the unit sphere in $\RR^q$, then $\EE U U^\top = \frac{1}{q}\Ib$.
\end{enumerate}

Then by 2. in Lemma~\ref{lemma:sherically-symmetric}, we get
\begin{equation*}
	\begin{split}
		\EE \left( \gamma^\top \Cb_m^\top \Cb_m \gamma \right) &= \EE \|\gamma\|^2 \tr (\Cb_m^\top \Cb_m)/q, \\
		\EE \left( \gamma^\top \widetilde{\Cb}_m \gamma \right) &= \EE \|\gamma\|^2 \tr (\widetilde{\Cb}_m)/q
	\end{split}
\end{equation*}

Moreover, the assumption \eqref{eq:confoundingSte-2} implies \eqref{eq:1} and \eqref{eq:2} converge to zero. Consequently, following the same argument in Appendix~\ref{apd:b-thm-main}, we get the following results:
If $\rho_\beta > 0$ and $q \rightarrow \infty$, it holds that
\begin{equation}
	1-\Vcal  \stackrel{p}{\rightarrow} \frac{\|\beta\|^2}{\|\beta\|^2} = 1,
\end{equation}
while if $\beta = 0$, we have
\begin{equation}
	1-\Vcal = \frac{\left\| \frac{1}{m} \sum_{j=1}^m \hat{\beta}(S_j) \right\|^2}{\frac{1}{m} \sum_{j=1}^{m}  \|\hat{\beta}(S_j)\|^2} \stackrel{p}{\rightarrow} \frac{\tr (\Cb_m^\top \Cb_m)}{\tr (\widetilde{\Cb}_m)},
\end{equation}
as $q \rightarrow \infty$. Then this theorem follows.

\section{Proof of Theorem~\ref{cor:gaussian-case}}

Drawing $M$ permutations $\{\pi_i\}_{i=1}^M$ uniformly, we know that
\begin{equation*}
	((\bar{\Yb}(\pi_1), \Xb, \Wb), \dots, (\bar{\Yb}(\pi_M), \Xb, \Wb))
\end{equation*}
is exchangeable under the null hypothesis $H_0$ in \eqref{eq:hypothesis} where $\bar{\Yb}^{\pi_i} = \Wb \gamma  + \Nb_y^{\pi_i}$ since $\Nb_y$ is an independent noise.
Given any valid statistic $T(\Yb, \Xb, \Wb)$ such as $T = \Vcal$ in our case, $p$-values are of the form
\begin{equation*}
	\begin{split}
		& p((\Yb, \Xb, \Wb), (\bar{\Yb}(\pi_1), \Xb, \Wb), \dots,(\bar{\Yb}(\pi_M), \Xb, \Wb)) \\
		=& \frac{\sum_{i=1}^{M} \ind \lbrace T(\bar{\Yb}^{\pi_i}, \Xb, \Wb) \geq T(\Yb, \Xb, \Wb) \rbrace + 1}{M+1}.
	\end{split}
\end{equation*}

By exchangeability, we get
\begin{equation}
	\PP(p((\Yb, \Xb, \Wb), (\bar{\Yb}(\pi_1), \Xb, \Wb), \dots,(\bar{\Yb}(\pi_M), \Xb, \Wb)) \leq \alpha \mid (\Xb, \Wb), \gamma) \leq \alpha
\end{equation}

For the permutation $\pi_i$ and the corresponding permutation matrix $P_i$ and an estimation $\hat{\gamma}$, we get
\begin{equation}
	\begin{split}
		\hat{\Yb}(\pi_i) \mid H_0, \Xb, \Wb, \gamma
		=& \Wb \hat{\gamma} + P_i(\Yb-\Wb\hat{\gamma}) \mid H_0, \Xb, \Wb, \gamma\\
		=& \Wb \hat{\gamma} + P_i\Wb(\gamma - \hat{\gamma}) + P_i \Nb_y \mid H_0, \Xb, \Wb, \gamma \\
		\sim & \Ncal(\Wb \hat{\gamma} + P_i\Wb(\gamma-\hat{\gamma}), \sigma_y^2 \Ib)
	\end{split}
\end{equation}

Recall that $e^M=(1/\sqrt{M},\dots, 1/\sqrt{M})^\top$ is a $M$-dimensional unit vector. Since $P_i$ and $P_j$ are drawn uniformly and independently, it holds that
\begin{equation*}
	\Cov( \bar{\Yb}^{\pi_{i}}, \bar{\Yb}^{\pi_{j}} \mid H_0, \Xb, \Wb, \gamma) = \EE [P_i \Nb_{y} \Nb_{y}^\top P_j ]= \sigma_y^2 e_M e_M^\top ,
\end{equation*}
and
\begin{equation*}
	\Cov( \hat{\Yb}^{\pi_{i}}, \hat{\Yb}^{\pi_{j}} \mid H_0, \Xb, \Wb, \gamma) = \EE [P_i \Nb_{y} \Nb_{y}^\top P_j ]= \sigma_y^2 e_M e_M^\top .
\end{equation*}
Therefore, we know eigenvalues of the covariance matrices $ (\bar{\Yb}(\pi_1), \dots, \bar{\Yb}(\pi_M))$ and $(\hat{\Yb}(\pi_1), \dots, \hat{\Yb}(\pi_M))$ conditional on $H_0, \Xb, \Wb, \gamma$ are $(M\sigma_y^2, \sigma_y^2, \dots,  \sigma_y^2)$.
It is well-known that if $g_1 \sim \Ncal(\mu_1, G) $ and $g_2 \sim \Ncal(\mu_2, G)$, then
\begin{equation*}
	d_{\text{KL}}(g_1, g_2) = \frac{1}{2} (\mu_1 - \mu_2)^\top G^{-1} (\mu_1 - \mu_2).
\end{equation*}
Then we get
\begin{equation*}
	\begin{split}
		& d_{\text{KL}}((\bar{\Yb}(\pi_1), \dots, \bar{\Yb}(\pi_M)),  (\hat{\Yb}(\pi_1), \dots, \hat{\Yb}(\pi_M))) \\
		\leq& \frac{1}{2 \sigma_y^2} \sum_{i=1}^{M}\|\Wb (\gamma -\hat{\gamma}) + P_i(\Wb(\gamma- \hat{\gamma}))\|^2 \\
		\leq& \frac{M \|\Wb(\gamma- \hat{\gamma})\|^2}{\sigma_y^2},
	\end{split}
\end{equation*}
where $d_{\text{KL}}$ is the Kullback–Leibler divergence.

Note that the total variation distance $d_{\text{TV}}$ is defined as $d_{\text{TV}}(Q_1, Q_2) = \sup_E |Q_1(E) - Q_2(E) |$ where the supremum is taken over all measurable sets.
Then by the definition of total variation, it follows that
\begin{equation*}
	\begin{split}
		& \PP(p((\Yb, \Xb, \Wb), (\hat{\Yb}(\pi_1), \Xb, \Wb), \dots,(\hat{\Yb}(\pi_M), \Xb, \Wb)) \leq \alpha \mid \Xb, \Wb, \gamma) \\
		\leq& \PP(p((\Yb, \Xb, \Wb), (\bar{\Yb}(\pi_1), \Xb, \Wb), \dots,(\bar{\Yb}(\pi_M), \Xb, \Wb)) \leq \alpha \mid \Xb, \Wb, \gamma)  \\
		&+  d_{\text{TV}} \lbrace [(\bar{\Yb}(\pi_1), \dots, \bar{\Yb}(\pi_M))\mid \Xb, \Wb, \gamma], [(\hat{\Yb}(\pi_1), \dots, \hat{\Yb}(\pi_M))\mid \Xb, \Wb, \gamma] \rbrace.
	\end{split}
\end{equation*}
By Pinsker’s inequality relating total variation distance to the Kullback Leibler divergence $d_{\text{KL}}$, we get
\begin{equation*}
	\begin{split}
		& d_{\text{TV}}^2 \left\lbrace [(\bar{\Yb}(\pi_1), \dots, \bar{\Yb}(\pi_M))\mid \Xb, \Wb, \gamma], [(\hat{\Yb}(\pi_1), \dots, \hat{\Yb}(\pi_M))\mid \Xb, \Wb, \gamma, ] \right\rbrace \\
		\leq & \frac{1}{2} d_{\text{KL}} \left\lbrace [(\bar{\Yb}(\pi_1), \dots, \bar{\Yb}(\pi_M))\mid \Xb, \Wb, \gamma], [(\hat{\Yb}(\pi_1), \dots, \hat{\Yb}(\pi_M))\mid \Xb, \Wb, \gamma] \right\rbrace \\
		\leq & \frac{ M \|\Wb (\gamma -\hat{\gamma})\|^2}{2 \sigma_y^2},
	\end{split}
\end{equation*}
which completes the proof.

\section{Additional Experiments}\label{apd:additional-experiments}

This section includes additional experiments.

{\bf Results of varying parameters: }
First, we vary parameters $\rho_\beta, d$, and $|S_j|$ to examine more details of our approach. Note that the basic setting is that $d=3$, $r=5$, $\ell=300$, $\rho_\gamma =2.5$ for {\it data generating procedure} and $m =40, M=400, k=3$ for {\it Algorithm~\ref{alg:PT}+\ref{alg:model-averaging}+RS}. We repeated the experiment 500 times for each point in Figure ~\ref{fig:sem2}.

\begin{figure*}[h]
	\centering
	\caption{The result of varying parameters.} \label{fig:sem2}
	\includegraphics[width=0.32\textwidth]{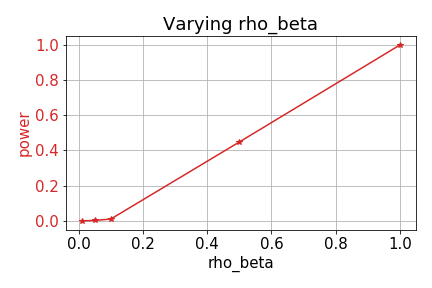}
	\includegraphics[width=0.32\textwidth]{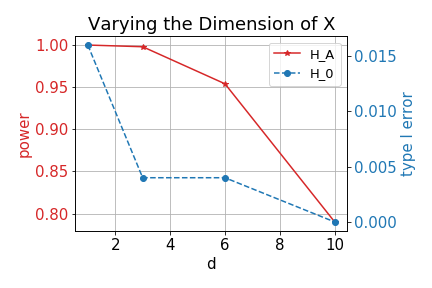}
	\includegraphics[width=0.32\textwidth]{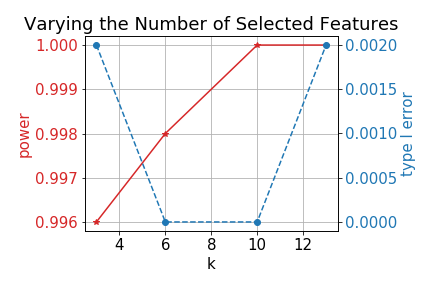}
\end{figure*}

We can see the power increasing exponentially with respect to $\rho_\beta$ and increasing the dimension of $X$ decreases the power. This is only slightly changed when varying $k$. Type I error is small in all settings.

{\bf Results of varying numbers of hidden components: } Next, we conduct experiments with varying numbers of hidden components for the various competing approaches as well as our random selection method. We used $d=1$, $q=500$, $r=5$, $\ell=100$, $\rho_\gamma =1$, $m = 100, M =1000$, $|S_j|=10$, and generated data for $\rho_\beta = 0, 1$.  
We consider two ways to generate $\gamma$:

\noindent
{\it Generating model 1:} Draw $\gamma^\prime$ from $\Ncal(0,\Ib)$ and $\gamma= \rho_\alpha \gamma^\prime/\|\gamma^\prime\|$,

\noindent
{\it Generating model 2:} Draw $\gamma^\prime$ from Student's t-distribution with degree of freedom $2.2$ and $\gamma= \rho_\alpha \gamma^\prime/\|\gamma^\prime\|$,

and three settings:

\noindent
{\it Setting 1:} $(W_1, \dots, W_{500})$ is accessible,

\noindent
{\it Setting 2:} We set $(W_{1}, \dots, W_{250})$ as latent variables. That is, only $(W_{251}, \dots, W_{500})$ is accessible,

\noindent
{\it Setting 3:} We set $(W_1, \dots, W_{450})$ as latent variables. That is, only $(W_{451}, \dots, W_{500})$ is accessible.

Note that in generating procedure 1, $\gamma^\prime$ is drawn from a rotation invariant and light-tailed distribution, i.e., a normal distribution. Thus, with high probability, the magnitude of all elements of $\gamma$ are similar, and there is no outlier. On the other hand, in generating procedure 2, $\gamma^\prime$ is drawn from a rotation invariant and heavy-tailed distribution, i.e., a student's t distribution with a low degree of freedom. Hence, with high probability,  the magnitude of elements of $\gamma$ are dominated by some outliers. In other words, most coefficients of $\gamma$ are very close to zero, while only a few elements' magnitudes are large.

The results are summarized in Table~\ref{table:HA} and Figure~\ref{fig:h0}. We first discuss the result for the alternative hypothesis.

\begin{table}[h]
	\centering
	\caption{This table shows the proportion p-values that are less than the significant level $\alpha$ under the alternative hypothesis~\eqref{eq:hypothesis} for $\rho_\beta=1$.}
	\begin{tabular}{cccccccccc}
		\toprule
		$H_A$ &    $\rho_\beta=1$       & \multicolumn{4}{c}{Generating model 1} & \multicolumn{4}{c}{Generating model 2} \\
		\cmidrule(lr){3-6} \cmidrule(lr){7-10}
		& $\alpha$ & \textbf{RS}        & BM       & FL       & DR       & \textbf{RS}       & BM        & FL        & DR       \\
		\cmidrule(lr){1-2} \cmidrule(lr){3-6} \cmidrule(lr){7-10}
		Setting 1 & 0.01     & {\bf 0.985}     & 0.98     & 0.3425   & 0.395    & {\bf 0.985}    & 0.9825    & 0.295     & 0.335    \\
		Setting 2 & 0.01     & {\bf 0.9775}    & 0.9625   & 0.8175   & 0.8775   & {\bf 0.98}     & 0.9725    & 0.8175    & 0.87     \\
		Setting 3 & 0.01     & 0.9675    & 0.945    & 0.995    & {\bf 0.9975}   & 0.955    & 0.9625    & 0.995     & {\bf 0.9975}   \\
		\bottomrule
	\end{tabular}
	\label{table:HA}
\end{table}

Table~\ref{table:HA} shows the similar pattern as the comparison result in Section~\ref{sec:simulations}.
We can see that RS shows the highest power in settings 1 and 2 for all generating models. As we mentioned in Section~\ref{sec:simulations}, it may be counter-intuitive that FL and DR work better as the number of hidden confounders increase, but since $\gamma$ is drawn from the prior with zero mean, the confounding effect will be offset when $q$ is large.
There is no significant difference between generating models 1 and 2.
Overall, RS is most stable and performing well in all settings.

\begin{figure*}[h]
	\centering
	\includegraphics[width=0.32\textwidth]{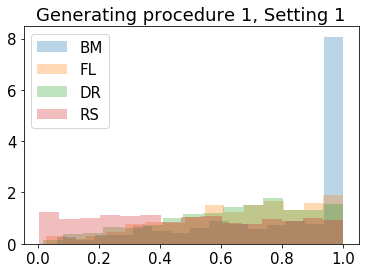}
	\includegraphics[width=0.32\textwidth]{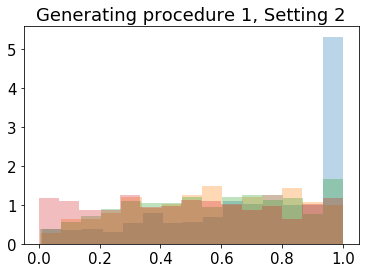}
	\includegraphics[width=0.32\textwidth]{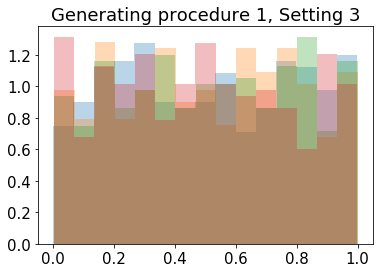}
	\includegraphics[width=0.32\textwidth]{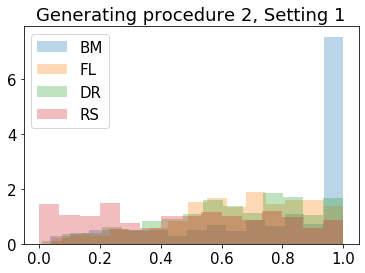}
	\includegraphics[width=0.32\textwidth]{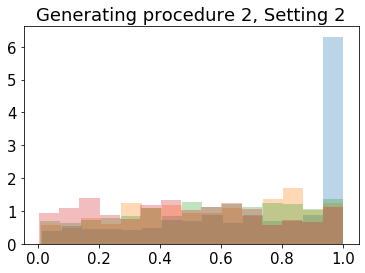}
	\includegraphics[width=0.32\textwidth]{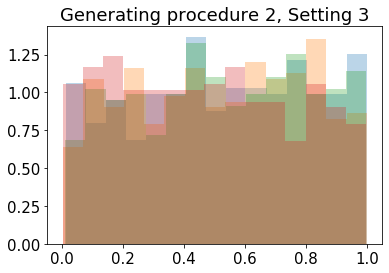}
	\caption{The histogram of p-values under the null hypothesis~\eqref{eq:hypothesis}.}
	\label{fig:h0}
\end{figure*}

From Figure~\ref{fig:h0}, we can see distributions of p-values of all methods are not uniform under the null hypothesis except our proposed method. Interestingly, increasing the number of hidden variables eliminates this phenomenon. On the other hand, the performance of RS and BM decreases, and the performance of FL and DR increases as the number of hidden variables increases. Again, this result is because $\gamma$ is drawn from some prior with zero mean, and the confounding effect is cancelled, which aligns with our conclusion in Section~\ref{sec:experiments}. The distribution of $\gamma$ has no significant effect on all methods. DR outperforms RS in setting 3. However, we would like to point out additional experiments are conducted in the case $\rho_\alpha / \rho_\beta =1$, which is a simpler case compared to $\rho_\alpha / \rho_\beta =20/3$ in Section~\ref{sec:experiments}. Moreover, our proposed method is most robust in all settings, generating procedures, and various model parameters.

\section{Additional Result for the real dataset}\label{apd:real-data}

In this section, we include the full analysis result for OLS in Section~\ref{sec:real-dataset}. The summary is stated in Table~\ref{tab:OLS}. We can see that even though R-squared is not high, the probability of F-statistics is very small, which indicates we reject the hypothesis that all the variables have zero regression coefficients. Moreover, there is no significant evidence that the error assumption is wrong, and the sample size is large enough, which validates the use of linear regression.
All covariates are statistically significant except "urban" and "cue80". The reason may be "urban" and "cue80" are highly correlated to other features like "stwmfg80".

\begin{table}[h]
	\caption{OLS Regression Results}\label{tab:OLS}
	\begin{center}
		\begin{tabular}{lclc}
			\toprule
			\textbf{Dep. Variable:}    & bytest        & \textbf{  R-squared:         } & 0.193     \\
			\textbf{Model:}            & OLS           & \textbf{  Adj. R-squared:    } & 0.191     \\
			\textbf{Method:}           & Least Squares & \textbf{  F-statistic:       } & 102.4     \\
			\textbf{No. Observations:} & 4739          & \textbf{  Prob (F-statistic):} & 5.83e-210 \\
			\textbf{Df Residuals:}     & 4727          & \textbf{  Log-Likelihood:    } & -16470.   \\
			\textbf{Df Model:}         & 11            & \textbf{  AIC:               } & 3.296e+04 \\
			\textbf{Covariance Type:}  & nonrobust     & \textbf{  BIC:               } & 3.304e+04 \\ \bottomrule
		\end{tabular}
	\end{center}
	\begin{center}
		\begin{tabular}{lcccccc}
			\toprule
			& \textbf{coef} & \textbf{std err} & \textbf{t} & \textbf{P$> |$t$|$} & \textbf{[0.025} & \textbf{0.975]}  \\
			\midrule
			\textbf{Intercept} &      49.0890  &        0.906     &    54.198  &         0.000        &       47.313    &       50.865     \\
			\textbf{female}    &      -1.0427  &        0.229     &    -4.544  &         0.000        &       -1.493    &       -0.593     \\
			\textbf{black}     &      -6.8793  &        0.330     &   -20.851  &         0.000        &       -7.526    &       -6.232     \\
			\textbf{hispanic}  &      -4.0855  &        0.309     &   -13.243  &         0.000        &       -4.690    &       -3.481     \\
			\textbf{dadcoll}   &       2.8857  &        0.327     &     8.837  &         0.000        &        2.246    &        3.526     \\
			\textbf{momcoll}   &       2.1927  &        0.369     &     5.938  &         0.000        &        1.469    &        2.917     \\
			\textbf{ownhome}   &       1.1346  &        0.303     &     3.746  &         0.000        &        0.541    &        1.728     \\
			\textbf{urban}     &      -0.3818  &        0.289     &    -1.321  &         0.187        &       -0.948    &        0.185     \\
			\textbf{cue80}     &       0.0222  &        0.045     &     0.491  &         0.624        &       -0.067    &        0.111     \\
			\textbf{stwmfg80}  &       0.2795  &        0.090     &     3.094  &         0.002        &        0.102    &        0.457     \\
			\textbf{dist}      &      -0.2674  &        0.055     &    -4.883  &         0.000        &       -0.375    &       -0.160     \\
			\textbf{incomehi}  &       0.7163  &        0.275     &     2.604  &         0.009        &        0.177    &        1.256     \\
			\bottomrule
		\end{tabular}
	\end{center}
	\begin{center}
		\begin{tabular}{lclc}
			\toprule
			\textbf{Omnibus:}       & 145.874 & \textbf{  Durbin-Watson:     } &    1.781  \\
			\textbf{Prob(Omnibus):} &   0.000 & \textbf{  Jarque-Bera (JB):  } &   70.318  \\
			\textbf{Skew:}          &  -0.047 & \textbf{  Prob(JB):          } & 5.38e-16  \\
			\textbf{Kurtosis:}      &   2.411 & \textbf{  Cond. No.          } &     102.  \\
			\bottomrule
		\end{tabular}
	\end{center}
\end{table}

\end{document}